\theoremstyle{definition}
\newcommand{\mypar}[1]{{\bf #1.}}
\newtheorem{myLem}{Lemma}
\newtheorem{myAlg}{Algorithm}
\newtheorem{myThm}{Theorem}
\newcommand{\R}{\ensuremath{\mathbb{R}}}
\DeclareMathOperator{\Id}{I}
\def\a{\mathbf{a}}
\def\x{\mathbf{x}}
\def\y{\mathbf{y}}
\def\h{\mathbf{h}}
\def\n{\mathbf{n}}
\def\q{\mathbf{q}}
\def\s{\mathbf{s}}
\def\vv{\mathbf{v}}
\def\z{\mathbf{z}}
\def\N{\mathcal{N}}
\def\V{\mathcal{V}}
\def\M{\mathcal{M}}
\def\U{\mathcal{U}}
\DeclareMathOperator{\Adj}{A}
\DeclareMathOperator{\HH}{H}
\DeclareMathOperator{\X}{X}
\DeclareMathOperator{\Um}{U}
\DeclareMathOperator{\W}{W}
\DeclareMathOperator{\Ss}{S}
\newcommand{\tabincell}[2]
{
\begin{tabular}{@{}#1@{}}#2\end{tabular}
}
\title{Sampling and Recovery of Graph Signals based on Graph Neural Networks}
\author{Siheng Chen$^{*}$~\IEEEmembership{ Member,~IEEE}, 
        Maosen Li$^{*}$~\IEEEmembership{Student Member,~IEEE},
        Ya Zhang~\IEEEmembership{ Member,~IEEE},
\thanks{S. Chen is with Mitsubishi Electric Research Laboratories (MERL), Cambridge, MA, USA. Email: schen@merl.com. M. Li and Y. Zhang are with 
 Cooperative Medianet Innovation Center at Shanghai Jiao Tong University, Shanghai, China. Emails: maosen\_li@sjtu.edu.cn, ya\_zhang@sjtu.edu.cn. $*$ Siheng Chen and Maosen Li contributed equally to this work.
 }}
\begin{document}
\maketitle

\begin{abstract}
We propose interpretable graph neural networks for sampling and recovery of graph signals, respectively. To take informative measurements, we propose a new graph neural sampling module, which aims to select those vertices that maximally express their corresponding neighborhoods. Such expressiveness can be quantified by the mutual information between vertices' features and neighborhoods' features, which are estimated via a graph neural network. To reconstruct an original graph signal from the sampled measurements, we propose a graph neural recovery module based on the algorithm-unrolling technique, which transforms each iteration of an analytical recovery algorithm to a network layer. Compared to previous analytical sampling and recovery, the proposed methods are able to flexibly learn a variety of graph signal models from data by leveraging the learning ability of neural networks; compared to previous neural-network-based sampling and recovery, the proposed methods are designed through exploiting specific graph properties and provide interpretability. We further design a new multiscale graph  neural network, which is a trainable multiscale graph filter bank and can handle various graph-related learning tasks. The multiscale network leverages the proposed graph neural sampling and recovery modules to achieve multiscale representations of a graph, and involves a new feature-crossing layer, which allows intermediate features from different scales to communicate and further improves the learning ability. Due to the iconic feature-crossing layers, we call this architecture a graph cross network.  In the experiments, we illustrate the effects of the proposed graph neural sampling and recovery modules and find that the modules can flexibly adapt to various graph structures and graph signals. In the task of active-sampling-based semi-supervised learning, the graph neural sampling module improves the classification accuracy over $10\%$ in Cora dataset. We further validate the proposed multiscale graph neural network on several standard datasets for both vertex and graph classification. The results show that our method consistently improves the classification accuracies.
\end{abstract}

\begin{keywords}
graph sampling, algorithm unrolling, graph neural networks, graph classification, semi-supervised learning
\end{keywords}

%\tableofcontents

\section{Introduction}
\label{sec:intro}
With the explosive growth of information and communication, data today is generated at an unprecedented rate from various sources, including social, biological and physical infrastructure~\cite{Jackson:08,Newman:10}, among others. Unlike time-series signals or images, these signals possess complex, irregular structures, which can be modeled as graphs. Analyzing graph signals requires new concepts and tools to handle the underlying irregular relationships, leading to the emerging fields of graph signal processing~\cite{OrtegaFKMV:18} and graph neural networks~\cite{BronsteinBLSV:17}. Graph signal processing generalizes the classical signal processing toolbox to the graph domain and provides a series of techniques to process graph signals~\cite{ShumanNFOV:13,SandryhailaM:14}, including graph-based transformations~\cite{HammondVG:11}, graph filter bank design~\cite{NarangO:12,ShumanFV:16} and graph topology learning~\cite{DongTRF:19}. On the other hand, graph neural networks expand deep learning techniques to the graph domain and provide a data-driven framework to learn from graph signals with graph structures as induced biases~\cite{BronsteinBLSV:17}. For example, graph convolutional networks and the variants have attained remarkable success in social network analysis~\cite{KipfW:17}, point cloud processing~\cite{WangSLSBS:19}, action recognition~\cite{LiCCZWT:19} and relational inference~\cite{HuCZG:20}. In this work, we consider sampling and recovery of graph signals, which has been highly concerned in both graph signal processing~\cite{ChenVSK:15,AnisAO:15,BaiWCNG:20} and graph neural networks~\cite{YingYMRHL:18,LeeLK:19}. 

% The ability to sample a function, manipulate the resulting sequence with a discrete-time system, and then recover a function, is the foundation of digital signal processing. Conversely, the ability to recover a sequence to create a function, manipulate the resulting function with a continuous-time system, and then sample to produce a sequence, is the foundation of digital communications.

In classical signal processing, signal sampling and recovery are key techniques to link continuous-time signals (functions of a real variable) and discrete-time signals (sequences indexed by integers)~\cite{VetterliKG:12}. Sampling produces a sequence from a function, and recovery produces a function from a sequence. In the graph domain, sampling is a reduction of a graph signal to a small number of measurements, and recovery is a reconstruction of a graph signal from noisy, missing, or corrupted measurements. The recovery techniques are usually related to the sampling procedure because the sampling procedure determines the property of the measurements. Several types of graph signal sampling are considered in literature. For example, subsampling selects one vertex in each measurement~\cite{ChenVSK:15,AnisAO:15}; local-neighboring sampling selects a set of connected vertices in each measurement~\cite{WangCG:16}; and aggregated sampling considers a time-evolving graph signal and selects one vertex at different time stamps in each measurement~\cite{MarquesSGR:15}. We focus on subsampling in this paper.

Based on the graph signal processing framework,  analytical graph signal sampling and recovery operators are designed by mathematics and provide  optimal or near-optimal solutions for some specific graph signal model, which is a class of graph signals with certain graph-related properties. For example,~\cite{ChenVSK:15,AnisCO:16} focus on a bandlimited graph signal model and presents a graph counterpart of Shannon–Nyquist sampling theory. Recently,~\cite{TanakaE:20} considers a periodic graph spectrum subspace and provides a generalized sampling framework. The techniques related to analytical graph signal sampling and recovery have been broadly applied to multiscale graph filter bank design~\cite{ShumanFV:16}, sensor placement~\cite{SakiyamaTTO:16}, semi-supervised learning~\cite{GaddeAO:14}, traffic monitoring~\cite{ChenYFK:16}, 3D point cloud processing~\cite{ChenTFVK:18}, and many others.  

A fundamental challenge for analytical sampling and recovery is that we may not know an appropriate graph signal model in practice. When a mathematically-designed graph signal model is far away from the  ground-truth, yet unknown graph signal model that  generates real-world data, it might cause a significant performance gap between theory and practice. Furthermore, a ground-truth graph signal model could be too complicated to be precisely described in mathematics, leading to computationally intensive algorithms. To solve this issue, it is desirable to achieve a data-driven model that is appropriately learnt from given graph signals. In other words, sampling and recovery strategies should have sufficient flexibility to learn from and adapt to arbitrary graph signal models. 

On the other hand, from a perspective of deep neural networks, sampling, or called pooling, is also critical in many learning tasks as it can adjust data resolution and enable multiscale analysis, which has achieved significant success in image and video analysis~\cite{RonnebergerFB:15}. In the graph domain, graph sampling/pooling is also a key component for a graph neural network. Compared to sampling on regular lattices, sampling of graph signals is technically more challenging due to highly irregular structures. To handle this, most previous methods directly train a sampling operator purely based on downstream tasks without explicitly exploring graph-related property~\cite{YingYMRHL:18,LeeLK:19}. It is thus hard to explain the vertex-selection strategy of the learnt sampling operator, which potentially impedes us from improving the method.

In this work, we propose new graph-neural-network-based sampling and recovery operators, which leverage the learning ability of graph neural networks and provide interpretability from a signal processing perspective. To achieve sampling, our strategy is to select those vertices that can maximally express their corresponding neighborhoods. Those vertices can be regarded as the centers of their corresponding communities. We can mathematically quantify the expressiveness by the mutual information between vertex and neighborhood features. Through neural estimation of mutual information, the proposed graph neural sampling module can be optimized and naturally provide an affinity metric between a vertex and a neighborhood in a data-driven fashion. By varying the neighborhood radius, we are allowed to tradeoff between local and global mutual information. Compared to analytical graph signal sampling~\cite{ChenVSK:15,AnisAO:15,TanakaE:20}, the proposed graph neural sampling module is model-free and sufficiently flexible to adapt to arbitrary graph signal models and arbitrary subsequent tasks. Compared to previous graph-neural-network-based sampling~\cite{YingYMRHL:18,LeeLK:19}, the proposed one explicitly exploits the graph-related property and provides interpretability. Moreover, previous methods need a subsequent task to provide additional supervision, while the proposed method is rooted in estimating the dependencies between vertices and neighborhoods, which can be trained in either an unsupervised or a supervised setting.

To achieve recovery, we propose a new graph neural recovery module based on algorithm unrolling. We first propose an iterative recovery algorithm and then transform it to a graph neural network by mapping each iteration to a network layer. Compared to analytical graph recovery, the proposed trainable method is able to learn a variety of graph signal models from given graph signals by leveraging neural networks. Compared to many other neural networks~\cite{KipfW:17,VelivckovicCCRLB:18}, the proposed method is interpretable by following analytical iterative steps. 

Based on the proposed graph neural sampling and recovery modules, we further propose a new multiscale graph neural network, which is a trainable counterpart of a multiscale graph filter bank. The proposed network includes three key basic operations: graph downsampling\footnote{Graph downsampling includes two parts: graph signal downsampling and graph structure downsampling. Graph signal sampling is equivalent to graph signal downsampling.}, graph upsampling and graph filtering. Each operations is trainable and extended from its analytical counterpart. Here we implement graph downsampling and graph upsampling by the proposed graph neural sampling and recovery modules, respectively. By adjusting the last layer  and the final supervision, the proposed multiscale graph neural network can handle various graph-related tasks. Compared to a conventional multiscale graph filter bank~\cite{ShumanFV:16}, the proposed  multiscale graph neural network is trainable and more flexible to adapt to a new task. Compared to previous multiscale graph neural networks~\cite{GaoJ:19,DengZWZF:20}, our model leverages the proposed novel graph neural sampling and recovery modules and presents a new feature-crossing layer that allows multiscale features to communicate in the intermediate network layers, and further improve the learning ability. Because of the structure of the feature-crossing layer, we also call our model~\emph{graph cross network} (GXN).

To illustrate the sampling strategy of the proposed graph neural sampling module, we compare the selected vertices based on various sampling methods for both simulated and real-world scenarios. We find that (i) when dealing with smooth graph signals, the proposed graph neural sampling module has similar performance with the analytical sampling; (ii) the proposed graph neural sampling and recovery modules can flexibly adapt to various graph signals, such as piecewise-smooth graph signals. In a task of active-sampling-based semi-supervised learning, the proposed graph neural sampling module improves the classification accuracies of previous sampling methods over $10\%$ in Cora dataset.

To validate the performance of the proposed multiscale graph neural network, we conduct extensive experiments on several standard datasets for both vertex classification and graph classification. Compared to state-of-the-art methods for these two tasks, the proposed method improves the average classification accuracies by $1.15\%$ and $1.30\%$, respectively.

\mypar{Contributions}  The main contributions of the paper include:
\begin{itemize}
\item We propose a novel graph neural sampling module, which is designed and optimized through estimating the dependency between vertex and neighborhood features.

\item We propose a novel graph neural recovery module by unrolling an analytical recovery algorithm, leading to a trainable and interpretable network.

\item We propose novel multiscale graph neural networks, which is a trainable counterpart of multiscale graph filter banks by leveraging the proposed  graph neural sampling and recovery modules.

\item We conduct experiments to illustrate the sampling strategy of the proposed graph neural sampling module and the recovery performance of the proposed graph neural recovery module.

\item We validate the proposed multiscale graph neural network on two tasks: graph classification and vertex classification.  Compared to state-of-the-art methods for these two tasks, the proposed method improves the average classification accuracies by $1.15\%$ and $1.30\%$, respectively.
\end{itemize}

The rest of the paper is organized as follows: Section~\ref{sec:formulation} formulates the task of sampling and recovery of graph signals. Sections~\ref{sec:sampling} and~\ref{sec:recovery} propose a graph neural sampling module and a  graph neural recovery module, respectively. Based on these two modules, Section~\ref{sec:MGNN} further proposes a multiscale graph neural network, which is a trainable counterpart of a multiscale graph filter bank. We illustrate the effects of the proposed graph neural sampling and recovery modules on several exemplar graphs in Section~\ref{sec:illustration}. Finally, the experiments validating the advantages of the proposed multiscale graph neural network are provided in Section~\ref{sec:application}.

\section{Problem Formulation}
\label{sec:formulation}
In this section, we formulate the task of sampling and recovery of graph signals based on two approaches: an analytical approach and a neural-network-based approach. The analytical approach overviews the conventional methods in graph signal processing and designs sampling and recovery operators by mathematics. The neural-network-based approach designs trainable sampling and recovery operators based on deep learning techniques, which lays a foundation for the proposed sampling and recovery methods.

\subsection{Basic concepts}
We consider a graph $G = (\V, \Adj)$, where $\V = \{v_n\}_{n=1}^{N}$ is the set of \emph{vertices} and $\Adj \in \R^{N \times N}$ is the graph adjacency matrix, where the element $\Adj_{i,j}$ represents an weighted edge from the $i$th to the $j$th vertex to characterize the corresponding vertex relation, such as similarity or dependency. Based on the graph $G$, a \emph{graph signal} is defined as a map that assigns a signal coefficient $x_n \in \R$ to the vertex $v_n$; that is, all the vertices carry their associated signal coefficients to form the global signal on the whole graph. A graph signal can be formulated as a length-$N$ vector defined by
$
\x \ = \ \begin{bmatrix}
 x_1 & x_2 & \ldots & x_{N}
\end{bmatrix}^{\top},
$
where the $n$th vector element $x_n$ is indexed by the vertex $v_n$. 

The multiplication between the graph adjacency matrix and a graph signal, $\Adj\x$, replaces the signal coefficient at each vertex with the weighted linear combination of the signal coefficients of the corresponding neighbors according to the relations represented by $\Adj$. In other words, the graph adjacency matrix enables the value at each vertex shift to its neighbors; we thus call it a~\emph{graph shift operator}~\cite{SandryhailaM:13}.

\subsection{Analytical sampling \& recovery framework}
The dual process of sampling and recovery for a graph signal is to use a few measurements to represent a complete graph signal, where sampling is a reduction of a graph signal to a small number of measurements, and recovery is a reconstruction of a graph signal from noisy, missing, or corrupted measurements. There are many types of sampling considered in literature, including subsampling~\cite{ChenVSK:15,AnisAO:15}, neighborhood sampling~\cite{WangCG:16} and aggregated sampling~\cite{MarquesSGR:15}. Here we focus on subsampling, which samples the signal coefficient at one vertex in each measurement.

Suppose that we first sample $M$ coefficients of a graph signal $\x \in \R^N$ to produce a sampled signal $\y \in \R^M$ ($M < N$), where $\y_i = \x_{\M_i}$ with the $i$th sampled index $\M_i \in \{0, 1, \cdots, N-1 \}$. Then, we interpolate the sampled signal $\y$ to obtain the reconstruction $\x' \in \R^N$, which recovers the original graph signal $\x$ either exactly or approximately. Mathematically, the whole process of graph sampling and recovery is formulated as
\begin{eqnarray}
{\rm sampling:}~~&& \y =  \Psi \x,
\nonumber \\ \nonumber
{\rm recovery:}~~&&\x' =  {\Phi} \y = \Phi \Psi \x,
\end{eqnarray}
where the sampling operator $\Psi$ is an $M \times N$ matrix whose $i,j$th element is
\begin{equation}
\label{eq:Psi}
 \Psi_{i,j} = 
  \left\{ 
    \begin{array}{rl}
      1, & j = \M_i;\\
      0, & \mbox{otherwise};
  \end{array} \right. 
\end{equation}
and the recovery operator $\Phi$: {$\R^M \to \R^N$} maps the measurements to a reconstructed graph signal. In general, it is difficult to perfectly recover the original graph signal $\x$ by using its reconstruction $\x'$ due to the information loss during sampling.

We usually consider that an original graph signal $\x$ is generated from a known and fixed graph signal model $\mathcal{X}$, which is a class of graph signals with specific properties that are related to the irregular graph structure $G$. For example, the bandlimited and approximately bandlimited graph signal model are commonly used to model smooth graph signals~\cite{ChenVSK:15,ChenVSK:15c}. A main challenge of designing a pair of sampling and recovery operators is to cope with the graph signal model $\mathcal{X}$. A common prototype is to solve the following optimization problem:
\begin{equation}
\label{eq:analytical_samping_recovery_opt}
  \min_{\Psi, \Phi}~~ \mathbb{E}_{\x \in \mathcal{X}}~ \left\| \x - \Phi \y \right\|_2^2,~~
  {\rm subject~to~~} \y = \Psi \x,
\end{equation}
where the expectation $\mathbb{E}$ can be replaced by other aggregation functions, such as the maximum.  The analytical solutions of~\eqref{eq:analytical_samping_recovery_opt} provide a pair of sampling and recovery operators that theoretically minimize the expected recovery error for the graph signals in the graph signal model $\mathcal{X}$. When $\mathcal{X}$ is the bandlimited graph signal model, we can design the analytical sampling and recovery operators to guarantee the perfect recovery~\cite{ChenVSK:15}, which expands the classical Shannon-Nyquist sampling theory to the irregular graph domain~\cite{Eldar:15}.

With additional constraints, we could consider many variants of~\eqref{eq:analytical_samping_recovery_opt}. For example, we could consider either deterministic or randomized methods to obtain a sampling operator~\cite{ChenVSK:15a,PuyTGV:15}. We could also introduce additional graph-regularization terms to guide the  recovery process, such as the quadratic form of graph Laplacian~\cite{ShumanNFOV:13} and the graph total variantion~\cite{ChenSMK:14}; see more details in recent review papers~\cite{LorenzoBB:20,TanakaEOC:20}.

\begin{figure}[t]
\centering
\includegraphics[width=0.48\textwidth]{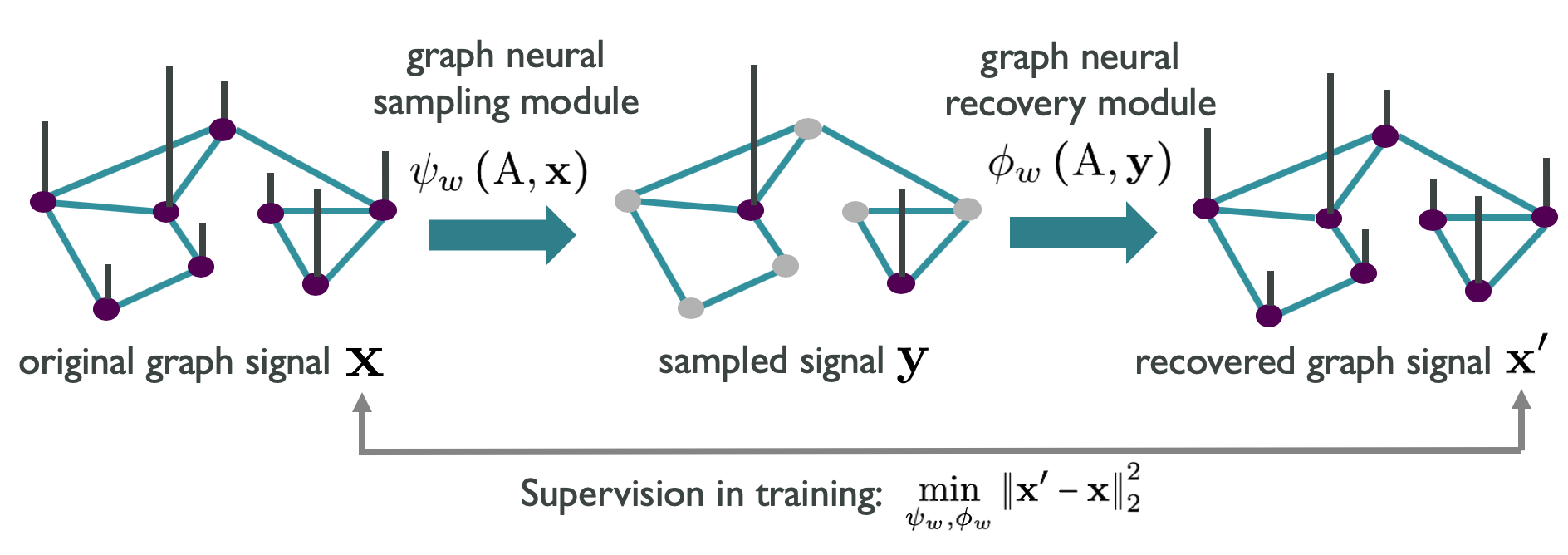}
\caption{\small Graph neural sampling and recovery framework. 
In the training phase, based on the reconstruction of given graph signals, graph neural sampling and recovery modules are optimized through neural networks. After training, we fix both modules and use them to sample and recovery a new graph signal.}
\label{fig:pooling}
\end{figure}

\subsection{Neural sampling \& recovery framework}
A fundamental limitation of the above-mentioned analytical framework is that we may not know the graph signal model in practice. It is then hard to either appropriately choose a graph signal model or precisely design a new mathematical model. To solve this issue, it is desirable to learn an appropriate model in a data-driven manner. Suppose that we are given a set of graph signals, which sufficiently represents a graph signal model that we are interested in. We then aim to train a pair of neural-network-based sampling and recovery operators according to the given data. Those operators then can leverage the learning ability of neural networks to adapt to an arbitrary graph signal model.

Let $\psi_w (\cdot,\cdot)$ and $\phi_w (\cdot, \cdot)$ be graph neural sampling and recovery modules, respectively, where the subscript $w$ generally reflects both functions involve training parameters. We then have
\begin{subequations}
\label{eq:neural_sampling_recovery}
\begin{eqnarray}
\label{eq:neural_sampling}
{\rm neural~sampling:}~&& \y = \Psi \x =  \psi_w \left( \Adj,  \x  \right),
\\ 
\label{eq:neural_recovery}
{\rm neural~recovery:}~&&\x' =  {\phi}_w \left( \Adj, \y \right),
\end{eqnarray}
\end{subequations}
where $\Adj$ is the graph adjacency matrix, the $M \times N$ matrix $\Psi$ follows the definition of the sampling operator~\eqref{eq:Psi}. The graph neural sampling module is supposed to aggregate information from both graph structure and graph signals to determine the sampling operator $\Psi$; and the graph neural recovery module is supposed to exploit information from both graph structure and measurements to construct a graph signal. 

Instead of defining an explicit graph signal model mathematically, we want the network to learn a graph signal model from given data. This is a practical assumption because  graph structures and graph signals are provided together in many scenarios. For example, in a social network, the users' relationship forms a graph structure and users' profile information forms graph signals. Let 
\begin{equation}
\label{eq:graph_signals}
\X = 
\begin{bmatrix} 
\x^{(1)} & \x^{(2)} & \cdots & \x^{(L)} 
\end{bmatrix} 
= \begin{bmatrix} 
\s_1^T \\ \s_2^T \\ \vdots \\ \s_N^T
\end{bmatrix} 
\end{equation}
be a $N \times L$ matrix that contains $L$ graph signals generated from an unknown  graph signal model $\mathcal{X}$. The $i$th column vector $\x^{(i)} \in \R^N$ is the $i$th graph signal and the $v$th row vector $\s_v  \in \R^L$ collects the signal coefficients supported at vertex $v$.

Based on the given data, we consider the following optimization problem 
\begin{eqnarray}
\label{eq:neural_sampling_recovery_opt}
&& \min_{\psi_w (\cdot,\cdot), \phi_w (\cdot,\cdot)} ~~~\sum_{i=1}^L \left\| \x^{(i)} - {\phi}_w \left( \Adj, \y^{(i)} \right) \right\|_2^2
\\ \nonumber
&& {\rm subject~to~~~~}  \y^{(i)} =  \psi_w \left( \Adj, \x^{(i)} \right).
\end{eqnarray}
Comparing~\eqref{eq:analytical_samping_recovery_opt}~and~\eqref{eq:neural_sampling_recovery_opt}, there are two major differences: i) we work with $L$ graph signals, which can be regarded as a proxy of an unknown graph signal model; and ii) the sampling and recovery operators are substituted by neural-network-based modules, which are proposed to implicitly capture an appropriate graph signal model from given graph signals. In~\eqref{eq:analytical_samping_recovery_opt}, $\Psi$ and $\Phi$ are solved analytically; while in~\eqref{eq:neural_sampling_recovery_opt}, $\psi_w (\cdot,\cdot)$ and $\phi_w (\cdot,\cdot)$ are obtained through training.

We can solve the optimization problem by stochastic gradient descent~\cite{Goodfellow:2016}. After training, we fix the trainable parameters in $\psi_w (\cdot,\cdot), \phi_w (\cdot,\cdot)$. Then, for any arbitrary  graph signal $\x$ generated from the same model $\mathcal{X}$, we can follow~\eqref{eq:neural_sampling_recovery}: using $\psi_w (\cdot,\cdot)$ to take measurements and using $ \phi_w (\cdot,\cdot)$ to recover the complete graph signal $\x$.

Based on this new trainable framework, we are going to specifically design a graph neural sampling module and a graph neural recovery module in the following two sections, respectively.

\section{Graph Neural Sampling Module}
\label{sec:sampling}
In this section, we are going to design a neural network $\psi_w (\cdot,\cdot)$ to select a vertices set $\M \subset \mathcal{V}$ that contains $|\M| = M$ vertices. Two key challenges include: i) we need to explicitly describe the importance of a vertex in a complex and irregular graph; and ii) we need to consider information from two sources: graph structures and given graph signals. A trivial neural network architecture with brute-force end-to-end learning can hardly work in this case. We need to exploit graph-related properties. Our main intuition is to select those vertices that can maximally express their corresponding neighborhoods. Those vertices can be regarded as the centers of their corresponding communities. We can mathematically measure the expressiveness by the mutual information between vertices' features and neighborhoods' features. Those features include information from both graph structures and graph signals. The proposed graph neural sampling module can be optimized through estimating this mutual information.

\begin{figure*}[t]
\centering
\includegraphics[width=1\textwidth]{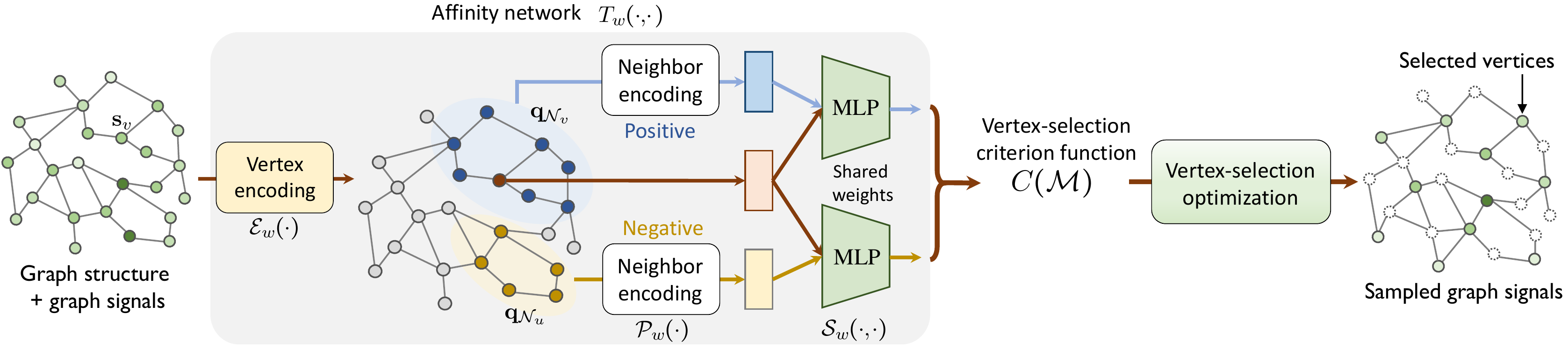}
\caption{\small Implementation of graph neural sampling module. According to~\eqref{eq:C}, we evaluate the importance of a vertex based on how much it can reflect its own neighborhood and how much it can discriminate from an arbitrary neighborhood. In the figure, the grey area implements the affinity network $T_w(\cdot,\cdot)$ for both a positive pair (first term in~\eqref{eq:C}) and a negative pair (second term in~\eqref{eq:C}). Given the evaluation of the criterion function $C(\M)$, we use the greedy algorithm to solve the vertex-selection optimization~\eqref{eq:problem} and obtain informative vertices.
}
\label{fig:pooling}
\end{figure*}

\subsection{Mutual information neural estimation}
We first define a vertex's neighborhood. For a vertex $v \in \mathcal{V}$, its neighborhood is a subgraph, $G_v = (\mathcal{N}_v, \Adj_{\mathcal{N}_v})$. The corresponding vertex set
$\mathcal{N}_v = \{ u \in \mathcal{V}~|~d(u,v) \leq R \}$ contains the vertices whose geodesic distances to $v$ are no greater than a threshold $R$. The corresponding graph adjacency matrix $\Adj_{\mathcal{N}_v}$ is the corresponding submatrix of $\Adj$. We call $G_v$ is vertex $v$'s neighborhood; corresponding, $v$ is the anchor vertex of the neighborhood $G_v$.

We then define vertex's features and neighborhood's features. For vertex $v$, the feature is the given signal coefficients $\s_v \in \R^L$ in~\eqref{eq:graph_signals}. For its neighborhood, $G_v$, its feature includes both the connectivity information and vertex features in $\mathcal{N}_v$, denoted as $\q_{\mathcal{N}_v} = [\Adj_{\mathcal{N}_v}, \{ \s_\nu \}_{\nu \in \mathcal{N}_v}]$. Given an arbitrary vertex set $\M \subset \V$, we have access to the associated vertex features $\{ \s_v, v \in \M \}$ and the associated neighborhood features $\{ \q_{\N_v}, v \in \M \}$ whose anchors are in $\M$.

We now want to quantify the dependency between the vertex features and the neighborhood features in the vertex set $\M$, which could be quantified by the mutual information. Let a random variable $\vv$ be the feature of a randomly picked vertex in $\M$, the distribution of $\vv$ is $P_{\vv} = P(\vv = \s_v)$, where $\s_v$ is the outcome feature when we pick vertex $v$. Similarly, let a random variable $\n$ be the neighborhood feature of a randomly picked 
anchor vertex in $\M$, the distribution of  $\n$ is $P_{\n} = P( \n = \q_{\N_u})$, where $\q_{\N_u}$ is the outcome feature when we pick vertex $u$'s neighborhood. The mutual information between selected vertices and neighborhoods is the KL-divergence between the joint distribution $P_{\mathbf{v},\mathbf{n}} = P( \mathbf{v} = \mathbf{s}_v, \mathbf{n} = \mathbf{q}_{\mathcal{N}_v})$ and the product of marginal distributions $P_{\mathbf{v}} \otimes P_{\mathbf{n}}$; that is,
\begin{eqnarray}
    I^{(\M)} \left( \mathbf{v}, \mathbf{n} \right)
    & = & D_{\rm KL} \left( P_{\mathbf{v}, \mathbf{n}} || P_{\mathbf{v}} \otimes P_{\mathbf{n}} \right)
    \nonumber \\  \nonumber 
    & \stackrel{(a)}{\geq} & 
    \sup_{T\in\mathcal{T}} \Big\{  
    \mathbb{E}_{\mathbf{s}_v, \mathbf{q}_{\mathcal{N}_v}  \sim  P_{\mathbf{v}, \mathbf{n}}} \left[ T( \mathbf{s}_v,  \mathbf{q}_{\mathcal{N}_v} ) \right] 
    - \\
    && \mathbb{E}_{ \mathbf{s}_v \sim  P_{\mathbf{v}}, \mathbf{q}_{\mathcal{N}_u} \sim  P_{\mathbf{n}} } \left[ e^{T \left( \mathbf{s}_v,  \mathbf{q}_{\mathcal{N}_u} \right)-1} \right]
    \Big\},
\end{eqnarray}
where $(a)$ follows from $f$-divergence representation based on KL divergence~\cite{BelghaziBROBCH:18};  $T\in\mathcal{T}$ is an arbitrary function that maps the features of a pair of vertex and neighborhood to a real value, here reflecting the dependency between vertex and neighborhood.

Since our goal is to propose a vertex-selection criterion based on the dependencies between vertices and neighborhoods, it is not necessary to compute the exact mutual information based on KL divergence. Instead, we can use non-KL divergences to achieve favourable flexibility and convenience in optimization, which shares the same $f$-representation framework~\cite{NowozinCT:16}. Here we consider a GAN-like divergence.
\begin{eqnarray*}
\label{eq:GAN}
    I^{(\M)}_{\rm GAN} \left( \mathbf{v}, \mathbf{n} \right)
    & = &
    \sup_{T\in\mathcal{T}} \bigg\{  
    \mathbb{E}_{P_{\mathbf{v}, \mathbf{n}}} 
    \big[ \log \sigma \left( T( \mathbf{s}_v, \mathbf{q}_{\mathcal{N}_v} ) \right) \big]  +
   \\
   &&   \mathbb{E}_{P_{\mathbf{v}}, P_{\mathbf{n}} } \big[ \log \left( 1 - \sigma \left( T \left( \mathbf{s}_v, \mathbf{q}_{\mathcal{N}_u}\right) \right) \right) \big]
    \bigg\},
\end{eqnarray*}
where $\sigma(\cdot)$ is the sigmoid function. Intuitively, the function $T(\cdot,\cdot)$ evaluates the affinity between a vertex and a neighborhood. In practice, we cannot go through the entire function space $\mathcal{T}$; instead, we can parameterize $T(\cdot,\cdot)$ by a neural network $T_w(\cdot,\cdot)$, where $w$ denotes the trainable parameters. Through optimizing over $w$, we obtain a neural estimation of $I^{(\M)}_{\rm GAN}$, denoted as $\widehat{I}^{(\M)}_{\rm GAN}$.

We define our vertex selection criterion function to be this neural estimation; that is,
\begin{eqnarray}
\label{eq:C}
C \left( \M \right)  & = & 
\widehat{I}^{(\M)}_{\rm GAN}  \ = \
\max_{w}  \frac{1}{|\M|} 
\sum_{v \in \M}  
\log \sigma \left( T_w(   {\bf s}_v, {\bf q}_{\mathcal{N}_v}) ) \right)  +
\nonumber \\ 
&& 
\frac{1}{|\M|^2} 
\sum_{(v,u) \in  \M }  
\log  \big( 1 - \sigma \left(T_w({\bf s}_v, {\bf q}_{\mathcal{N}_u} )) \right)  \big).
\end{eqnarray}
In $C(\M)$, the first term reflects the average affinities between vertices and their own neighborhoods in  the vertex set $\M$; and the second term reflects the discrepancies between vertices and arbitrary neighborhoods. Notably, a higher $C$ score indicates that a vertex maximally reflects its own neighborhood and meanwhile minimally reflects all the other neighborhoods. 

To specify the affinity network $T_w(\cdot,\cdot)$, we consider 
\begin{equation*}
T_w \left(  \mathbf{s}_v, \mathbf{q}_{\mathcal{N}_u} \right)  = \mathcal{S}_w  \left( \mathcal{E}_w(\mathbf{s}_v), \mathcal{P}_w(\mathbf{q}_{\mathcal{N}_u} ) \right),
\end{equation*}
where the subscript $w$ indicates the associated functions are trainable\footnote{The trainable parameters in $\mathcal{E}_w(\cdot)$,  $\mathcal{P}_w(\cdot)$, and $\mathcal{S}_w(\cdot, \cdot)$ do not share weights.}, $\mathcal{E}_w(\cdot)$ and $\mathcal{P}_w(\cdot)$ are embedding functions for a vertex and a neighborhood, respectively, and $\mathcal{S}_w(\cdot, \cdot)$ is an affinity function to quantify the affinity between a vertex and a neighborhood; see an illustration in  Figure~\ref{fig:pooling}. We implement $\mathcal{E}_w(\cdot)$ and $\mathcal{S}_w(\cdot, \cdot)$ by multi-layer perceptrons (MLPs)~\cite{Goodfellow:2016}, which is a standard fully-connected neural network structure. We further implement $\mathcal{P}_w(\cdot)$ by aggregating vertex features and neighborhood connectivities in ${\bf q}_{\mathcal{N}_u}$; that is
\begin{equation*}
\mathcal{P}_w({\bf q}_{\mathcal{N}_u}) ~=~
\frac{1}{R}
\sum_{r=0}^{R}
\sum_{\nu\in\mathcal{N}_u}
\left( \Adj^r
\right)_{\nu,u}
{\W}^{(r)}
\mathcal{E}_w({\bf s}_{\nu}),~\forall u \in \M,
\end{equation*}
where  $\W^{(r)}\in\mathbb{R}^{d \times d}$ is the trainable weight matrix associated with the $r$th hop of neighbors. Here $\mathcal{P}_w(\cdot)$ is a variant of graph convolutional network~\cite{KipfW:17}.

In our method, the form of mutual information estimation and maximization is similar to deep graph infomax (DGI)~\cite{VelickovicFHLBH:19}, where both DGI and the proposed method use the techniques of mutual information neural estimation~\cite{BelghaziBROBCH:18,HjelmFLGBTB:19} to the graph domain; however, there are three major differences. First, DGI aims to train a graph embedding function while the proposed method aims to evaluate the importance of a vertex via its affinity to its neighborhood. Second, DGI considers the relationship between a vertex and an entire graph while we consider the relationship between a vertex and a neighborhood. By varying the neighbor-hop $R$ in $\mathcal{N}_u$, 
the proposed method is able to tradeoff local and global information. Third, DGI has to train on multiple graphs while the proposed one can work with an individual graph.

\subsection{Vertex selection optimization}
To evaluate the vertex selection criterion $C(\M)$ in~\eqref{eq:C} and solve the maximization problem, which naturally optimizes the internal affinity network $T_w(\cdot, \cdot)$. Note that i) the proposed network is based on the estimation of the dependencies between vertices and neighborhood, exploiting the graph-related property; and ii) the proposed network aggregates information from two sources: graph structures and graph signals.

We now can select the most informative vertex set according to the criterion function~\eqref{eq:C}. The vertex selection optimization problem is
\begin{equation}
\label{eq:problem}
    \max_{\M \subset \mathcal{V}}~~~ C \left( \M \right),~~~{\rm subject~to~~~} |\M| \ = \ M.
\end{equation}
To solve~\eqref{eq:problem}, we consider the submodularity of mutual information~\cite{ChenHHKK:15} and employ a greedy algorithm. We select the first vertex with maximum $C(\M)$ with $|\M|=1$; and we next add a new vertex sequentially by maximizing $C(\M)$ greedily;  however, it is computationally expensive to evaluate $C(\M)$ for two reasons: (i) for any vertex set $\M$, we need to solve an individual optimization problem; and (ii) the second term of $C(\M)$ includes all the pairwise interactions involved with quadratic computational cost. To address Issue (i), we set the vertex set to all the vertices in the graph, maximize $\widehat{I}^{(\mathcal{V})}_{\rm GAN}$ to train the affinity network $T_w(\cdot, \cdot)$. We then fix this network and evaluate $\widehat{I}^{(\M)}_{\rm GAN}$. To address Issue (ii), we perform negative sampling, approximating the second term~\cite{MikolovSCCD:13}.

Note that the proposed neural network $T_w(\cdot,\cdot)$ only provides the vertex-selection criterion, which serves as the objective function in the optimization problem~\eqref{eq:problem}. The subsequent selection algorithm is not part of the network.
% To improve the efficiency of solving problem~\eqref{eq:problem}, we modify $C(\Omega)$ by preserving only the first term. In this way, each vertex can contribute the vertex set independently. 

\subsection{Training details}
After solving the vertex selection problem~\eqref{eq:problem}, we obtain $\M$ that contains $M$ unique vertices selected from $\mathcal{V}$, leading to the sampling operator $\Psi$ in~\eqref{eq:neural_sampling}. In this way, we train the graph neural sampling module without considering the subsequent recovery task. Since the supervision is simply the relationships between vertices and neighborhoods in the graph structure and we do not use any additional labels as the network supervision, we consider this an unsupervised training setting.

We could also make the affinity network $T_w(\cdot, \cdot)$ be aware of the final recovery error, we introduce a trainable attention vector $\a \in [0,1]^N$ to provides a path to flow gradients for training $\psi_w (\cdot,\cdot)$.  The attention vector shares the same network with $\Psi$ and acts as a bridge connecting a graph neural sampling module and a subsequent graph neural recovery module. For vertex $v \in \V$, the attention score is
\begin{equation}
   \label{eq:weight_score}
    \a_v ~=~ 
    \sigma
    \left(T_w(\s_v, \q_{\N_v})
    \right),
\end{equation}
which measures the affinity between a vertex and its own neighborhood. With the attention vector ${\bf a}$, we assign an trainable importance to each vertex, provide a path for back-propagation to flow gradients, and  unify the training of a graph neural sampling module and a graph neural recovery module. Therefore, we can train the affinity network $T_w(\cdot, \cdot)$ by using the supervisions of both the vertex selection criterion function and the final recovery error. Since we include the recovery loss, we consider it as a supervised training setting.
 
When we sample a graph signal $\x \in \R^N$, we first sample the signal coefficients supported on the vertex set $\M$ and then use the attention vector $\a$ to adjust the measurements. The final samples is
\begin{equation}
\label{eq:graph_neural_sampling_measurements}
\y = \Psi {\rm diag}(\a) \x = \Psi \left(\a \odot \x \right) \in \R^M,
\end{equation}
where the sampling operator $\Psi$ is associted with the selected vertices $\M$ and ${\rm diag}(\a) \in \R^{N \times N} $ is a diagonal matrix. Note that i) $\Psi {\rm diag}(\a) \in \R^{M \times N}$ is a weighted sampling operator, where $\Psi$ provides the selected vertex indices and $\a$ provides the weights; and ii) the selected vertex indices in $\Psi$ are not a direct output from a neural network. They are obtained by solving the optimization~\eqref{eq:problem}, whose objective function is provided by a  neural network. In~\eqref{eq:neural_sampling}, we use a graph neural sampling module $\psi_w(\cdot, \cdot)$ to represent the entire process for simplicity. 

\subsection{Relations to analytical sampling}
Analytical sampling usually provides an optimal or near-optimal solution for sampling a graph signal generated from some specific graph signal model. Some widely-used models include the bandlimited class~\cite{ChenVSK:15}, the approximatley-bandlimited class~\cite{ChenVSK:15c}, the piecewise-smooth class~\cite{ChenVSK:16} and the periodic class~\cite{TanakaE:20}. Many works also extend the sampling setting from subsampling  to local-neighborhood sampling~\cite{WangCG:16}, aggregation sampling~\cite{MarquesSGR:15} and generalized sampling~\cite{TanakaE:20}. However, a fundamental issue of those previous works is that the ground-truth graph signal model in a specific task might be far away from our assumption, causing a significant performance gap between theory and practice. 

To cope with this issue, the proposed graph neural sampling module is data-adaptive and leverages the training ability of neural networks to implicitly capture the underlying graph signal model. At the same time, our method is different from some recent graph pooling methods proposed in the deep learning literature~\cite{YingYMRHL:18,LeeLK:19,DengZWZF:20} from three aspects. First, the previous graph pooling methods only depend on graph signals, while our method considers both graph structure and graph signals. Second, the previous graph pooling methods purely rely on trainable neural networks to directly generate sampling operators without exploring any graph-related properties, while our method explicitly considers the dependency between vertices and local neighborhoods in a graph. A trainable neural network is only used to estimate the mutual information and the output from this network has an interpretable semantic meaning. Third, the previous graph pooling methods have to have a subsequent task to provide additional supervision, while our method can be either supervised and unsupervised as the network is optimized via mutual information neural estimation.

\section{Graph Neural Recovery Module}
\label{sec:recovery}
In this section, we are going to design a neural network $\phi_w (\cdot,\cdot)$ to
recover a complete graph signal from a few measurements collected by the graph neural sampling module $\psi_w (\cdot,\cdot)$. Two key challenges include: i) we need to design an appropriate architecture to recover unknown signal coefficients; and ii) we need to enable end-to-end learning, so that both $\psi_w (\cdot,\cdot)$ and $\phi_w (\cdot,\cdot)$ can be trained together. Here we design the graph neural recovery module $\phi_w (\cdot,\cdot)$ based on the algorithm-unrolling technique, which transform an analytical graph recovery algorithm to a neural network. In the end, we use the recovery error as the supervision to train $\psi_w (\cdot,\cdot)$ and $\phi_w (\cdot,\cdot)$ simultaneously.

\subsection{General analytical graph signal recovery}
We start with an analytical graph signal recovery algorithm. Let $h(\Adj) = \sum_{\ell} h_{\ell} \Adj^{\ell} \in \R^{N \times N}$ be a graph filter with $h_{\ell}$ filter coefficients, which is a polynomial of the graph shift operator. We aim to solve the following optimization problem for recovery:
\begin{subequations}
\label{eq:graph_signal_recovery}
\begin{eqnarray}
  \label{eq:recovery_obj}
	\widehat{\x} &=& \arg \min_{\x} \,  \left\| h(\Adj) \x \right\|_2^2, 
	\\ 
  \label{eq:recovery_cond}  
	\text{subject to}&& \Psi \left( \x \odot \a \right) = \y.
\end{eqnarray}
\end{subequations}
The objective function~\eqref{eq:recovery_obj} promotes that the signal response after graph filtering should be small and
the constraint~\eqref{eq:recovery_cond} requires that when we use the same procedure to sample a recovered graph signal, the measurements should be the same with the current measurements. The design of  filter coefficients depends on a specific graph signal model. For example, to minimize the graph total variation, we can choose a  high-pass graph Haar filter; that is, $h(\Adj) = \Id - \Adj$~\cite{ChenTFVK:18}. Here we use a general form for generality.

We can obtain the closed-form solution for the optimization~\eqref{eq:graph_signal_recovery}. Without loss of generality, we assume that the known coefficients $x_1,\ldots,x_M$ correspond to the first $M$ graph nodes $v_1,\ldots,v_M;$ this arrangement can always be achieved by reordering nodes. Then, a graph signal is
\begin{equation*}
\x = \begin{bmatrix}
\x_\M \\
\x_\U
\end{bmatrix},
\end{equation*}
where $\x_\M \in \R^M$ is the measured part of the signal and $\x_\U \in \R^{N-M}$ is the unknown part to be recovered. Let $\HH = h(\Adj)^T h(\Adj) \in \R^{N \times N}$.  Correspondingly, we represent  $\HH$ in a block form as
\begin{equation*}
\HH = \begin{bmatrix}
\HH_{\M\M} & \HH_{\M\U}  \\
\HH_{\U\M} & \HH_{\U\U}
\end{bmatrix}.
\end{equation*}

\begin{myLem}
The closed-form solution for the graph signal recovery problem~\eqref{eq:graph_signal_recovery} is given by
\begin{equation}
\label{eq:recovery_cf_solution}
\widehat{\x}_\M = \y / \a_\M,~~~
\widehat{\x}_\U =  - \HH_{\U\U}^{-1}\HH_{\U\M} \y_\M / \a_\M,
\end{equation}
where $/$ is the element-wise division.
\end{myLem}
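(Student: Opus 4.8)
The plan is to solve the equality-constrained quadratic program \eqref{eq:graph_signal_recovery} directly. The constraint \eqref{eq:recovery_cond}, $\Psi(\x \odot \a) = \y$, under the node-reordering convention that the measured vertices come first, simply reads $\a_\M \odot \x_\M = \y$ (since $\Psi$ just picks out the first $M$ coordinates), which is equivalent to fixing $\x_\M = \y / \a_\M$ with $/$ the element-wise division. This immediately gives the first half of \eqref{eq:recovery_cf_solution}. So the whole problem reduces to an \emph{unconstrained} minimization over the free block $\x_\U \in \R^{N-M}$.

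\textbf{Step 1: rewrite the objective in block form.} Since $\|h(\Adj)\x\|_2^2 = \x^\top \HH \x$ with $\HH = h(\Adj)^\top h(\Adj)$, I expand using the block partition of $\HH$:
\begin{equation*}
\x^\top \HH \x = \x_\M^\top \HH_{\M\M} \x_\M + 2\,\x_\M^\top \HH_{\M\U}\x_\U + \x_\U^\top \HH_{\U\U}\x_\U,
\end{equation*}
using $\HH_{\U\M} = \HH_{\M\U}^\top$ (as $\HH$ is symmetric). With $\x_\M$ fixed by the constraint, the first term is a constant and I am minimizing a convex quadratic in $\x_\U$.

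\textbf{Step 2: take the gradient and solve.} Differentiating with respect to $\x_\U$ gives the stationarity condition $2\HH_{\U\M}\x_\M + 2\HH_{\U\U}\x_\U = \mathbf{0}$, hence $\x_\U = -\HH_{\U\U}^{-1}\HH_{\U\M}\x_\M$. Substituting $\x_\M = \y/\a_\M$ (and, if one wants to match the notation in \eqref{eq:recovery_cf_solution} literally, writing $\y$ for $\y_\M$) yields $\widehat{\x}_\U = -\HH_{\U\U}^{-1}\HH_{\U\M}\,\y/\a_\M$. Convexity of the quadratic (or positive definiteness of $\HH_{\U\U}$) guarantees this stationary point is the global minimizer.

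\textbf{The main obstacle} is the invertibility of $\HH_{\U\U}$, which the statement implicitly assumes when it writes $\HH_{\U\U}^{-1}$. Since $\HH = h(\Adj)^\top h(\Adj)$ is only positive \emph{semi}definite in general, $\HH_{\U\U}$ (a principal submatrix) need not be invertible unless the sampling set $\M$ is chosen compatibly with the filter $h$ — this is exactly the graph-analogue of the Nyquist-type sampling condition. I would note this as a standing assumption (or invoke a uniqueness-of-recovery condition from the cited analytical-sampling literature), and otherwise replace the inverse by a pseudo-inverse; the rest of the derivation is routine linear algebra and carries through verbatim.
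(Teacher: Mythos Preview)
Your proposal is correct and follows essentially the same route as the paper: rewrite the constraint as $\x_\M = \y/\a_\M$, expand the quadratic objective $\x^\top \HH \x$ in block form, and set the derivative with respect to $\x_\U$ to zero. Your version is in fact slightly more careful than the paper's, since you explicitly invoke the symmetry $\HH_{\U\M}=\HH_{\M\U}^\top$ and flag the implicit assumption that $\HH_{\U\U}$ is invertible, which the paper leaves unstated.
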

\begin{proof}
we can rewrite the constraint in~\eqref{eq:recovery_cond} as $\x_\M \odot \a_\M = \y$. Then, all the feasible solutions should satisfy $\x_\M  = \y / \a_\M$. We next represent the objective function in~\eqref{eq:recovery_obj} as
\begin{eqnarray*}
\nonumber
\left\| h(\Adj) \x \right\|_2^2	& = &
\x^T \HH \x \\
\nonumber
& = & \begin{bmatrix} \x_\M^T & \x_\U^T \end{bmatrix}
\begin{bmatrix}
\HH_{\M\M} & \HH_{\M\U}  \\
\HH_{\U\M} & \HH_{\U\U}
\end{bmatrix}
\begin{bmatrix} 
\x_\M^T &  \x_\U^T \end{bmatrix} \\
\label{eq:sAs}
& = &
\x_\M^T \HH_{\M\M} \x_\M
+
\x_\U^T \HH_{\U\M}
\x_\M +
\\ \nonumber
&& \,
\x_\M^T \HH_{\M\U}
\x_\U
+
\x_\U^T \HH_{\U\U}
\x_\U.
\end{eqnarray*}
The minimum of the objective function is found by setting its derivative to zero, which yields the closed-form solution
$\widehat{\x}_\U
= -\HH_{\U\U}^{-1} \HH_{\U\M} \widehat{\x}_\M.$
\end{proof}

To avoid calculating the computationally expensive matrix inversion, we can use an iterative algorithm to obtain the same closed-form solution.
\begin{myAlg}
\label{alg:recovery_iterative_solution}
Set the intial conditions as $k = 0$, $\x_\M^{<0>} = \y / \a_\M$, $\x_\U^{<0>} = 0$, and do the following three steps iteratively
\begin{eqnarray}
\label{eq:GTVM_Update}
\x^{<k+1>} & \ \leftarrow \ & \left( \Id - \alpha \HH \right) \x^{<k>},
 \\
\x_\M^{<k>} & \ \leftarrow \ & \y / \a_\M,
\nonumber \\
 k & \ \leftarrow \ & k + 1,
\nonumber
\end{eqnarray}
until convergence, where $\alpha$ is an appropriate step size.
\end{myAlg}

The following theorem analyzes the convergence property of Algorithm~\ref{alg:recovery_iterative_solution}.
\begin{myThm}
\label{thm:equivalency}
Let the iteration number $k \to \infty$. The solution of Algorithm~\ref{alg:recovery_iterative_solution} converges to~\eqref{eq:recovery_cf_solution} with the convergence error decreasing as $O( ||\widetilde{\HH}_{\U\U}||_2^k)$, where $\widetilde{\HH} = \Id - \alpha \HH$.
\end{myThm}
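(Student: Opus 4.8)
The plan is to split the iterate into its measured block $\x_\M^{<k>}$ and its unknown block $\x_\U^{<k>}$, observe that the first block is exact from the start, and reduce the whole claim to analyzing a single affine fixed-point recursion for the second block.

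First I would note that the reset step in Algorithm~\ref{alg:recovery_iterative_solution} forces $\x_\M^{<k>} = \y/\a_\M$ for every $k \ge 0$: it holds at $k = 0$ by the initialization, and it is re-imposed at the end of every iteration. Hence the $\M$-block already equals $\widehat{\x}_\M$ in~\eqref{eq:recovery_cf_solution} at all iterations, and the entire error $\x^{<k>} - \widehat{\x}$ lives in the $\U$-block. Writing $\widetilde{\HH} = \Id - \alpha\HH$ in the same block partition as $\HH$, the off-diagonal block of $\Id$ vanishes while its $\U\U$-block is the identity, so $\widetilde{\HH}_{\U\M} = -\alpha\HH_{\U\M}$ and $\widetilde{\HH}_{\U\U} = \Id - \alpha\HH_{\U\U}$. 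Substituting $\x_\M^{<k>} = \y/\a_\M$ into the update~\eqref{eq:GTVM_Update} and reading off the rows indexed by $\U$ gives the affine recursion
\begin{equation*}
\x_\U^{<k+1>} \ = \ \widetilde{\HH}_{\U\U}\,\x_\U^{<k>} + \widetilde{\HH}_{\U\M}\,(\y/\a_\M).
\end{equation*}

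Next I would identify its fixed point: solving $(\Id - \widetilde{\HH}_{\U\U})\,\x_\U^{\star} = \widetilde{\HH}_{\U\M}\,(\y/\a_\M)$ and using the two identities above reduces it to $\alpha\HH_{\U\U}\,\x_\U^{\star} = -\alpha\HH_{\U\M}\,(\y/\a_\M)$, i.e.\ $\x_\U^{\star} = -\HH_{\U\U}^{-1}\HH_{\U\M}\,\y/\a_\M$, which is exactly $\widehat{\x}_\U$ in~\eqref{eq:recovery_cf_solution}. Subtracting the fixed-point equation from the recursion, the error $\ee^{<k>} := \x_\U^{<k>} - \x_\U^{\star}$ obeys the purely linear recursion $\ee^{<k+1>} = \widetilde{\HH}_{\U\U}\,\ee^{<k>}$, so $\ee^{<k>} = \widetilde{\HH}_{\U\U}^{\,k}\,\ee^{<0>}$ and, since the $\M$-block carries no error, $\|\x^{<k>} - \widehat{\x}\|_2 = \|\ee^{<k>}\|_2 \le \|\widetilde{\HH}_{\U\U}\|_2^{\,k}\,\|\ee^{<0>}\|_2 = O(\|\widetilde{\HH}_{\U\U}\|_2^{\,k})$, which is the stated rate.

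The computation is routine; the one point deserving care is the legitimacy of the fixed-point step and the sense in which an ``appropriate step size'' makes the bound vanish. Here I would use that $\HH = h(\Adj)^{\top}h(\Adj)$ is positive semidefinite, hence so is its principal submatrix $\HH_{\U\U}$; combined with the Lemma's standing assumption that $\HH_{\U\U}^{-1}$ exists, this makes $\HH_{\U\U}$ positive definite, so any $0 < \alpha < 2/\lambda_{\max}(\HH_{\U\U})$ puts every eigenvalue of the symmetric matrix $\widetilde{\HH}_{\U\U} = \Id - \alpha\HH_{\U\U}$ in $(-1,1)$. By symmetry $\|\widetilde{\HH}_{\U\U}\|_2 = \rho(\widetilde{\HH}_{\U\U}) < 1$, so the affine recursion genuinely converges and the geometric decay above is non-vacuous. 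I expect no essential obstacle beyond keeping the block-matrix bookkeeping straight.
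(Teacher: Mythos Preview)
Your proposal is correct and follows essentially the same approach as the paper: reduce to the affine recursion $\x_\U^{<k+1>} = \widetilde{\HH}_{\U\U}\x_\U^{<k>} + \widetilde{\HH}_{\U\M}(\y/\a_\M)$ on the $\U$-block and extract the geometric rate $\|\widetilde{\HH}_{\U\U}\|_2^{k}$. The only cosmetic difference is that the paper unrolls the recursion into a partial Neumann sum (using $\x_\U^{<0>}=0$) and then takes the limit, whereas you identify the fixed point first and subtract to get the linear error recursion; your final paragraph on the step size is really the content of Theorem~\ref{thm:convergence} rather than of Theorem~\ref{thm:equivalency}.
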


\begin{proof}
We can rewrite the updating step~\eqref{eq:GTVM_Update} as 
\begin{eqnarray}
\x_\U^{<k+1>} & \ = \ & \widetilde{\HH}_{\U\M}\x_\M^{<k>} + \widetilde{\HH}_{\U\U}\x_\U^{<k>}
\nonumber \\ \nonumber
& \ \stackrel{(a)}{=} \ & \sum_{i=0}^{k} {\widetilde{\HH}_{\U\U}^i} \widetilde{\HH}_{\U\M} \x_\M^{<0>}
 + \widetilde{\HH}_{\U\U}^{k+1} \x_\U^{<0>},
\end{eqnarray}
where $(a)$ comes from the induction. Because of initialization $\x_\U^{<0>} =0$, the second term is zero. We then obtain
\begin{eqnarray}
 && \lim_{k \to +\infty} \x_\U^{<k+1>}  = \lim_{k \to +\infty}  \sum_{i=0}^{k} {\widetilde{\HH}_{\U\U}^i} \widetilde{\HH}_{\U\M} \x_\M^{<0>}
\nonumber \\
& \stackrel{(a)}{=} &  (\Id_{\U\U} - \widetilde{\HH}_{\U\U})^{-1} \widetilde{\HH}_{\U\M} \x_\M^{<0>}
\ = \ -\HH_{\U\U}^{-1} \HH_{\U\M} \x_\M^{<0>},
\nonumber
\end{eqnarray}
where $(a)$ comes from the property of the matrix inversion. Denote $\x_\U \ = \ \lim_{k \to +\infty} \x_\U^{<k>}$, we have
\begin{eqnarray*}
|| \x_\U^{<k>} - \x_\U ||_2 & \ = \ & ||\sum_{i=k}^{+\infty} {\widetilde{\HH}_{\U\U}^i} \widetilde{\HH}_{\U\M} \x_\M^{<0>} ||_2
\nonumber \\
& \ = \ & || \widetilde{\HH}_{\U\U}^k \sum_{i=0}^{+\infty} {\widetilde{\HH}_{\U\U}^i} \widetilde{\HH}_{\U\M} \x_\M^{<0>} ||_2
\nonumber \\
& \ = \ & || \widetilde{\HH}_{\U\U}^k \x_\U ||_2
 \ \stackrel{(a)}{\leq} \  || \widetilde{\HH}_{\U\U}||_2^k || \x_\U ||_2,
\end{eqnarray*}
where $(a)$ follows the definition of spectral norm.
We finally obtain
\begin{eqnarray}
\nonumber
\frac{|| \x_\U^{<k>} - \x_\U ||_2}{|| \x_\U ||_2} & \ \leq \ & || \widetilde{\HH}_{\U\U}||_2^k.
\end{eqnarray}
\end{proof}

The following theorem shows that it is fairly easy to choose $\alpha$ to lead to the convergence.
\begin{myThm}
\label{thm:convergence}
Let ~$0 < \alpha \leq 2/\lambda_{max}(\widetilde{\HH})$. Then,
$
||\widetilde{\HH}_{\U\U}||_2 \leq 1.
$
\end{myThm}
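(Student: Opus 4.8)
The plan is to reduce the claim to an elementary statement about eigenvalues: since $\HH = h(\Adj)^{T}h(\Adj)$ is symmetric positive semidefinite, and since passing to a principal submatrix of a symmetric matrix cannot enlarge the spectral norm, it suffices to prove $\|\widetilde{\HH}\|_{2} \le 1$ for $\widetilde{\HH} = \Id - \alpha\HH$ and then restrict to the $\U\U$ block.

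First I would establish $\|\widetilde{\HH}\|_{2} \le 1$. Because $\HH$ is symmetric positive semidefinite, every eigenvalue $\mu$ of $\HH$ lies in $[0,\lambda_{\max}(\HH)]$, and the eigenvalues of $\widetilde{\HH} = \Id - \alpha\HH$ are exactly the numbers $1 - \alpha\mu$. The hypothesis on $\alpha$ (equivalently $\alpha\,\lambda_{\max}(\HH) \le 2$) forces $\alpha\mu \in [0,2]$ for every such $\mu$, hence $1 - \alpha\mu \in [-1,1]$; since $\widetilde{\HH}$ is symmetric, $\|\widetilde{\HH}\|_{2}$ equals its largest eigenvalue in absolute value, so $\|\widetilde{\HH}\|_{2} \le 1$.

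Next I would pass from $\widetilde{\HH}$ to its principal submatrix $\widetilde{\HH}_{\U\U}$. Both matrices are symmetric, so each spectral norm is the maximum of the absolute Rayleigh quotient $|\z^{T}(\cdot)\z|$ over unit vectors. Given any unit vector $\z \in \R^{N-M}$, pad it with zeros on the $\M$-block to obtain $\widetilde{\z}\in\R^{N}$ with $\|\widetilde{\z}\|_{2} = 1$; then $\z^{T}\widetilde{\HH}_{\U\U}\z = \widetilde{\z}^{T}\widetilde{\HH}\widetilde{\z}$, whose absolute value is at most $\|\widetilde{\HH}\|_{2} \le 1$. Maximizing over $\z$ yields $\|\widetilde{\HH}_{\U\U}\|_{2} \le 1$. (Equivalently, one may invoke the Cauchy interlacing theorem: the eigenvalues of the principal submatrix $\widetilde{\HH}_{\U\U}$ interlace those of $\widetilde{\HH}$, hence all lie in $[-1,1]$.)

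The argument is short, and the only point requiring care — the mild obstacle — is the submatrix step: it genuinely uses that $\widetilde{\HH}$ is symmetric, so that the operator norm is captured by the Rayleigh quotient and the zero-padding comparison goes through; for a general non-symmetric matrix, restricting to a principal submatrix can strictly increase the operator norm. I would close by remarking that this is precisely the hypothesis needed in Theorem~\ref{thm:equivalency}: since the convergence error of Algorithm~\ref{alg:recovery_iterative_solution} decays as $O(\|\widetilde{\HH}_{\U\U}\|_{2}^{k})$, the bound $\|\widetilde{\HH}_{\U\U}\|_{2}\le 1$ guarantees the iteration is at worst non-expansive and therefore does not diverge, with strict contraction occurring whenever the relevant eigenvalues of $\widetilde{\HH}$ restricted to the unknown block are strictly inside $(-1,1)$.
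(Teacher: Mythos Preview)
Your proof is correct and follows essentially the same route as the paper: both arguments first bound $\|\widetilde{\HH}\|_{2}\le 1$ via the eigenvalues $1-\alpha\lambda_i$ of $\Id-\alpha\HH$ and then pass to the principal submatrix $\widetilde{\HH}_{\U\U}$. The paper simply cites ``the property of spectral norm'' for the submatrix step, whereas you spell it out via the Rayleigh-quotient/zero-padding (or interlacing) argument, which is a welcome clarification.
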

\begin{proof}
Since $\HH$ is real and symmetric, we decompose it as
\begin{eqnarray}
\label{eq:A_decomp}
 \HH & = & \Um \Lambda \Um^T,
\end{eqnarray}
where $\Um$ is an unitary matrix and $\Lambda$ are a diagonal matrix whose diagonal elements are are the corresponding eigenvalues, denoted as $\lambda_i$; moreover, $\lambda_1 \geq \lambda_2 \geq \ldots \geq \lambda_N \geq 0$. Since $0 < \alpha \leq 2/\lambda_{max}(\widetilde{\HH})$,  we have 
\begin{eqnarray}
\label{eq: alpha_ineq}
 1 \geq 1- \alpha \lambda_i \geq 1 - 2\lambda_i/\lambda_{max}(\HH) \geq -1.
\end{eqnarray}
We then bound the spectral norm of $\widetilde{\HH}_{\U\U}$ as
\begin{eqnarray}
	||\widetilde{\HH}_{\U\U}||_2  & \stackrel{(a)}{\leq} &||\widetilde{\HH}||_2 
	\ \stackrel{(b)}{=} \ ||   \Id - \alpha \HH  ||_2 
	\nonumber \\
	& \stackrel{(c)}{=} & ||    \Id - \alpha  \Um \Lambda \Um^T ||_2 
	\nonumber \\
	& \stackrel{(d)}{=} & ||   \Um  ( \Id - \alpha\Lambda) \Um^T ||_2 		
	\nonumber \\ \nonumber
	& \stackrel{(e)}{=} & \max_i  | 1- \alpha \lambda_i |  \stackrel{(f)}{\leq}  1,
\end{eqnarray}
where $(a)$ comes from the property of spectral norm,  $(b)$ follows the definition of $\widetilde{\HH}$, $(c)$ follows~\eqref{eq:A_decomp}, $(d)$ comes from the unitary property of $\Um$, $(e)$  comes from the definition of spectral norm and $(f)$ follows~\eqref{eq: alpha_ineq}.
\end{proof}
We now obtain an iterative algorithm to solve the recovery optimization~\eqref{eq:graph_signal_recovery}. The key step~\eqref{eq:GTVM_Update} is to use $\widetilde{\HH}$ to filter the solution $\x^{<k>}$ from the previous step and $\widetilde{\HH} = \Id - \alpha h(\Adj)^T h(\Adj)$ is essentially a polynomial of $\Adj$, which is again a graph filter. Then a fundamental challenge is now to choose the filter coefficients to adapt to a specific task. Previously we have to manually design the filter coefficients for a predefined graph signal model. Now with the neural network framework, we are going to adaptively learn those filter coefficients from data.

\subsection{Algorithm unrolling}
Algorithm unrolling provides a concrete and systematic connection between iterative algorithms in signal processing and deep neural networks, paving the way to developing interpretable network architectures. The core strategy is to unroll an iterative algorithm into a graph neural network by mapping each iteration into a single network layer and stacking multiple layers together. 

To unroll Algorithm~\ref{alg:recovery_iterative_solution}, we substitute the fixed $\widetilde{\HH}$ by a trainable graph filter.  We then consider the $k$-th network layer in the  neural recovery module works as
\begin{eqnarray}
\label{eq:unrolling_recovery}
\x^{<k+1>} & \ \leftarrow \ & \sum_{\ell=1}^L h_{\ell}^{<k>}
\Adj^{\ell}\x^{<k>},
\\
\x_\M^{<k>} & \ \leftarrow \ & \y / \a_\M,
\nonumber
\end{eqnarray}
where filter coefficients $
h_{1}^{<k>}, h_{2}^{<k>}, \cdots, h_{L}^{<k>} \in \R$ at each layer are trainable parameters. We can also consider a multi-channel setting and some other variants of a graph filter~\cite{GamaMLR:19}. 

After forward-propagation through $K$ layers, we output $\widehat{\x} = \x^{<K+1>}$ as the final solution of the neural recovery module.

\subsection{Training details}
We then can train the proposed graph neural sampling and recovery modules together. Given a set of $L$ graph signals~\eqref{eq:graph_signals}, 
we consider the following training loss
\begin{eqnarray}
\label{eq:recovery_loss}
\mathcal{L} ~=~ \mathcal{L}_{\rm recovery} + \alpha \mathcal{L}_{\rm sample},
\end{eqnarray}
where a recovery-error loss
\begin{eqnarray*}
\mathcal{L}_{\rm recovery} 
\ = \ 
\sum_{i=1}^L \left\| \x^{(i)} - {\phi}_w \Big( \Adj, \Psi \left( \x^{(i)} \odot \a \right) \Big) \right\|_2^2,
\end{eqnarray*}
with $\Psi$ and $\a$ following from the graph neural sampling module~\eqref{eq:graph_neural_sampling_measurements}, and ${\phi}_w$ following from the graph neural recovery module~\eqref{eq:unrolling_recovery}, and a vertex-selection loss based on mutual information neural estimation
\begin{equation}
\label{eq:mine_loss}
\mathcal{L}_{\rm sample}= - \widehat{I}^{(\mathcal{V})}_{\rm GAN},
\end{equation}
which is the vertex-selection criterion and relates to $\Psi$ and $\a$. The hyper-parameter $\alpha$ balances the overall recovery task and vertex-selection task. Note that our neural sampling module is trained through both $\mathcal{L}_{\rm recovery}$ and $\mathcal{L}_{\rm sample}$, which makes the graph neural sampling module adapt to the final recovery task; we consider this as the supervised setting. When we only  use $\mathcal{L}_{\rm sample}$ to train the graph neural sampling module, it is the unsupervised setting. In the entire process, the trainable components include the affinity network $T_w(\cdot, \cdot)$ for sampling and the graph filter coefficients for recovery.

\subsection{Relations to analytical recovery}
Analytical recovery usually solves
an optimization problem with a predefined graph-regularization term. For example,~\eqref{eq:recovery_obj} regularizes the filter responses. Some other options include the quadratic form of graph Laplacian~\cite{ShumanNFOV:13}, the $\ell_1$ form of graph incident matrix~\cite{WangSST:16}. However, a fundamental issue is that which graph-regularization term is appropriate in a specific recovery task.
An arbitrary graph-regularization term would provide misleading induced bias, causing a significant performance gap between theory and reality.

The proposed graph neural recovery method provides both flexibility and interpretability. On the one hand, it is  data-adaptive and supposed to learn a variety of graph signal models from given graph signals by leveraging deep neural networks. On the other hand, we unroll an analytical recovery algorithm to a graph neural network whose operations are interpretable by following analytical iterative steps.

\begin{figure*}[ht]
\centering
\includegraphics[width=0.9\textwidth]{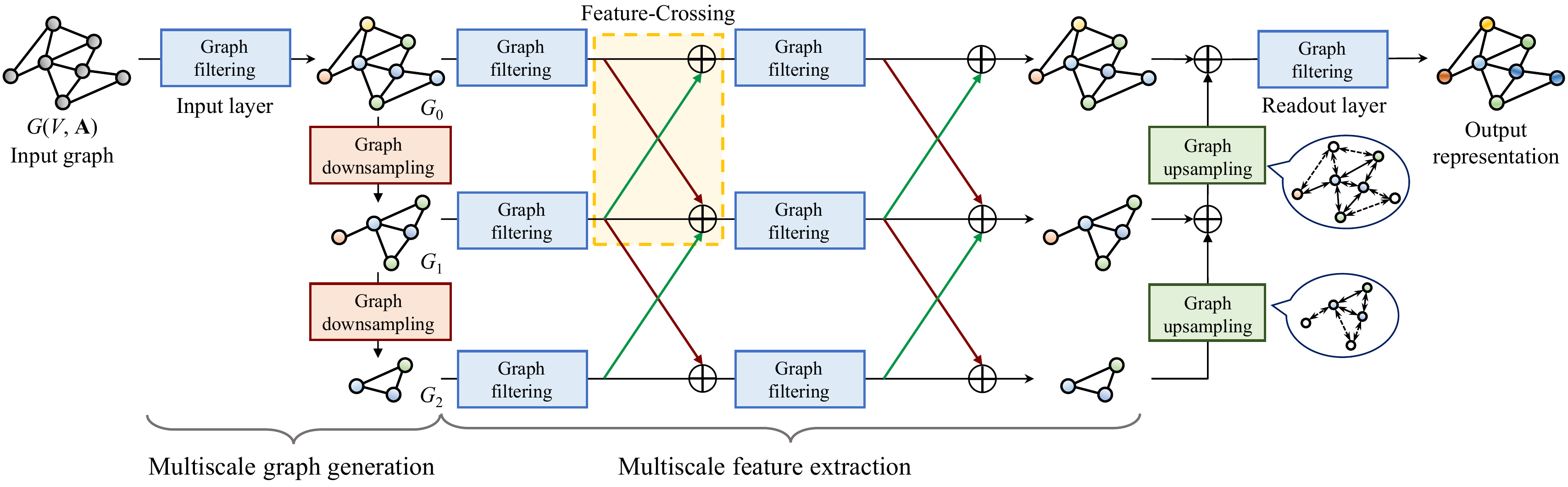}
\caption{\small Implementation of graph cross network (GXN). We show an exemplar model with 3 scales and 4 feature-crossing layers. Graph downsapling and upsampling are designed based on the proposed graph neural sampling and recovery modules, respectively. The proposed feature-crossing layer allows intermediate features at various scales to communicate and merge, promoting information flow across scales.}
\label{fig:MGNN}
\end{figure*}

\section{Multiscale Model: Graph Cross Network}
\label{sec:MGNN}
For analytical graph signal sampling and recovery, one of the biggest applications is to empower a graph filter bank to achieve multiscale analysis. For the proposed graph neural sampling and recovery modules, we can use them to design a multiscale graph neural network.

One benefit of multiscale representations is to achieve flexible tradeoff between local and global information~\cite{HuCOA:15,ShumanFV:16}. A multiscale graph filter bank is an array of band-pass graph filters that separates an input graph signal into multiple components supported at multiple graph scales. As a trainable counterpart of a multiscale graph filter bank, a multiscale graph neural network consists of three basic operations: graph downsampling, graph upsampling and graph filtering, which are all trainable and implemented by neural networks. 
Based on graph downsampling and upsampling, we can achieve multiscale graph generation; and based on graph filtering, we achieve multiscale feature extraction. Moreover, we propose a new feature-crossing layer to promote the fusion of the intermediate features across graph scales, improving the learning ablity. The iconic structure of feature-crossing layer also leads to the model name: graph cross network (GXN). 

\subsection{Multiscale graph generation} 
To build a multiscale graph neural network, we first need to generate multiple scales for a graph and provide the corresponding vertex-to-vertex association across scales. To achieve this, we need to design graph downsampling and upsampling.

\mypar{Graph downsampling} Graph downsampling is a process to compress a larger graph structure with the associated graph signal to
a smaller graph structure with the associated graph signal. It involves  two related, yet different parts: graph structure downsampling and graph signal downsampling.  Here graph signal downsampling is the same with graph signal sampling. During graph downsampling, we have to reduce the number of vertices and edges, causing information loss. A good graph downsampling method should preserve as much information as possible.

The proposed graph neural sampling module naturally selects informative vertices and collects the associated signal coefficients. This learnt sampling operator also provides the direct vertex-to-vertex association across scales. To further achieving graph structure downsampling, we need to connect the selected vertices according to the original connections. Here we consider three approaches:

$\bullet$ Direct reduction, that is, $\Psi \Adj \Psi \in \R^{M \times M}$ where $\Psi$ is the sampling operator. This is simple and straightforward, but loses significant connectivity information;

$\bullet$ Fused reduction, that is, $\Ss \Adj \Ss^T$ with $\Ss={\rm softmax}( \Psi \Adj )\in [0,1]^{M \times N}$, where softmax$(\cdot)$ is a row-wise softmax function, achieving the row-wise normalization Each row of $\Ss$ represents the neighborhood of a selected vertex and the intuition is to fuse the neighboring information to the selected vertices; and

$\bullet$ Kron reduction~\cite{DorflerB:13}, which is the Schur complement of the graph Laplacian matrix and preserves the graph spectral properties, but it is computationally expensive  due to the matrix inversion. We can convert the original graph adjacency matrix to a graph Laplacian matrix, execute Kron reduction to obtain a downsampled graph Laplacian matrix, and convert it back to a downsampled graph adjacency matrix.

In our experiments, we see that Kron reduction leads the effectiveness; the direct reduction leads the efficiency; and the fused reduction achieves the best tradeoff between effectiveness and efficiency. We thus consider the fused reduction as our default.

Overall, we can use graph downsampling several times to generate multiscale representations for a graph. Given an input graph structure $G(\mathcal{V}, \Adj)$ with the associated graph signal, $\x \in \mathbb{R}^N$, we first initialize the finest scale of graph structure as $G^0(\mathcal{V}^0, \Adj^{0})$ with $\mathcal{V}^0 = \mathcal{V}$, $\Adj^{0} = \Adj$ and the associated graph signal $\x^{0} = \x$. We then recursively apply graph downsampling for $S$ times to obtain a series of coarser scales of graph structure $G^1(\mathcal{V}^1, \Adj^1), \dots, G^S(\mathcal{V}^S, \Adj^S)$  and the associated graph signals $\x^{1}, \dots, \x^{S}$ from $G^0$ and $\x^{0}$, respectively, where $|\mathcal{V}^{s}|>|\mathcal{V}^{s'}|$ for $\forall~ 1 \leq s < s' \leq S$. Here the super-script indexes the graph scale.

\mypar{Graph upsampling} Graph upsampling is an inverse process of graph downsampling. Since we have the original graph structure and the vertex-to-vertex association across scales, we only need to design graph signal upsampling, which is equivalent to graph signal recovery. We can directly use the proposed graph neural recovery module to obtain the signal coefficients at the unselected vertices.

After extracting features at each scale, we can use graph upsampling to lift features extracted in a coarser scale to a finer scale. For example, let $G^s(\mathcal{V}^s, \Adj^s)$ and $\h^{s} \in \R^{|\mathcal{V}^s|}$ be the graph structure and the extracted feature vector at the $s$-th scale, respectively. To obtain the corresponding feature vector $\h^{s'} \in \R^{|\mathcal{V}^{s'}|}$ at the $s'$-th scale ($s' < s$), we can recursively apply graph upsampling for $s-s'$ times based on the graph structure at each scale.

The proposed graph downsampling and upsampling together can generate multiscale representations of a graph and enable the feature conversion across scales. Note that both are trainable and adaptive to specific data and tasks.

\subsection{Multiscale feature extraction} 
Given multiple scales of a graph, we build a graph neural network at each scale to extract features. Each  network consists of a sequence of trainable graph filters. After feature extraction at each scale, we combine deep features at all the scales together to obtain the final representation. We use graph upsampling to align features at different scales. We finally leverage a trainble graph filter to synthesize the fused multiscale features and generate the final representation for various downstream tasks, such as graph classification and vertex classification.

To further enhance information flow across scales, we propose a feature-crossing layer between two consecutive scales at various network layers, allowing multiscale features to communicate and merge in the intermediate network layers. Mathematically, let $\h^{<s,k>}$ be the feature vector at the $s$-th scale and the $k$-th network layer. In the same $k$-th network layer, we downsample the feature vector at the $s-1$-th scale to the $s$-th scale and obtain $\h^{<s,k, \downarrow>}$. We also upsample the feature vector at the $s+1$-th scale to the $s$-th scale and obtain $\h^{<s,k, \uparrow>}$. These two steps are implemented by using the proposed graph downsampling  and graph upsampling. After a feature-crossing layer, we add feature vectors from these three sources and obtain the fused feature vector
\begin{equation*}
\h^{<s,k>} \ \leftarrow \
   \begin{cases}
   ~\h^{<s,k>} + \h^{<s,k, \uparrow>}, & s=0,
   \\
   ~\h^{<s,k>} + \h^{<s,k, \downarrow>} + \h^{<s,k, \uparrow>}, & 0<s<S,
   \\
   ~\h^{<s,k>} + \h^{<s,k, \downarrow>}, & s=S.
   \end{cases}
\end{equation*}
Since the proposed feature-crossing layer forms a cross shape, we call this multiscale graph neural network architecture~\emph{graph cross network (GXN)}; see Figure~\ref{fig:MGNN}. Compared to the standard multiscale graph neural network, the intermediate cross connections promote the information flow across multiple scales and improve the performances.

\subsection{Training details}
Here we consider training GXN for two tasks: vertex classification and graph classification. We use the same network architecture to implement multiscale graph generation and feature extraction for both tasks. The only differences between these two tasks are the final output and the loss function. 

In vertex classification, we aim to classify each vertex to one or more predicted categories. The final output of GXN is a predicted labeling vector $\widehat{\z}_v \in \R^C$ for vertex $v$, where $C$ is the number of vertex categories in a graph. We then use the cross-entropy loss between the predicted and ground-truth labeling vector to supervise the training; that is,
\begin{equation}
\label{eq:vertex_classification_loss}
\mathcal{L}_{\rm vertex} \ = \ -\sum_{v} \z_v^{T}\log(\widehat{\z}_v),
\end{equation}
where $\z_v \in \R^C$ is ground-truth labeling vector for vertex $v$.

In graph classification, we aim to classify an entire graph to one or more predicted categories. After obtaining the final feature vector $\widehat{\z}_v$ for each vertex, we use the standard SortPool~\cite{ZhangCNC:18} to remove the vertex dimension and  obtain a graph labeling vector $\widehat{\z}_G \in \R^C$ for a graph $G$, where $C$ is the number of graph categories in a dataset. We then use the cross-entropy loss between the predicted and ground-truth labeling vector to supervise the training; that is,
\begin{equation*}
\mathcal{L}_{\rm graph} \ = \ -\sum_{G} \z_G^{T}\log(\widehat{\z}_G), 
\end{equation*}
where $\z_G \in \R^C$ is ground-truth labeling vector for a graph $G$.

Similarly to the training of sampling and recovery~\eqref{eq:recovery_loss}, we can further add the vertex-selection loss~\eqref{eq:mine_loss} to make the involved graph neural sampling modules adapt to the final task.

\subsection{Relations to graph filter banks}
A graph filter bank uses a series of band-pass graph filters that expands the input graph signal into multiple subband components~\cite{NarangO:12,ShumanWHV:15,ShumanFV:16,SakiyamaWTO:19}. By adjusting the component in each subband, a graph filter bank can flexibly modify and reconstruct a graph signal. The expansion part is called analysis and the reconstruct part is called synthesis. To analyze and synthesize at multiple scales, a multiscale graph filter bank uses analytical graph signal sampling to achieve pyramid representations~\cite{ShumanFV:16}. The benefit is that we can extract useful features at each scale and combine them in the end. A multiscale graph filter bank includes three building blocks: graph downsampling, graph upsampling and graph filtering.

The proposed multiscale graph neural network is essentially a trainable multiscale graph filter bank. The graph downsampling follows from graph neural sampling in Section~\ref{sec:sampling}; the graph upsampling follows from graph neural recovery in Section~\ref{sec:recovery}; and graph filtering follows from trainable graph convolution operation proposed in previous works~\cite{GamaMLR:19,GamaRB:19}. All these three components are trainable and data-adaptive. The analysis and synthesis modules are implicitly implemented during learning. By adjusting the last layer and the supervision for the proposed multiscale graph neural network, we can handle various graph-related tasks. Compared to a conventional multiscale graph filter bank, a multiscale graph neural network is more flexible for a new task.

\section{Illustration of Sampling and Recovery} 
\label{sec:illustration}
In this section, We are going to compare the analytical solutions and the neural-network solutions on a few toy examples. We try to provide some insights on various vertex-selection strategies and the corresponding recovery performances.

\subsection{Vertex selection strategy} 
Here we consider two experiments. First, we consider sampling of bandlimited signals on two similar, yet different graph structures, where we obtain some intuitions about how graph structures influence the vertex selection.  In the second experiment, we consider sampling of bandlimited and piecewise-bandlimited signals on the same graph structure, which helps understand how given graph signals influence the vertex selection.

\mypar{Effect of graph structures}
We generate two graph structures with $2,400$ vertices based on the stochastic block models~\cite{Abbe18}. Each graph has two communities, which have $1,800$ and $600$ vertices, respectively. 
For the first graph, we set the connection probability between vertices in the first community be $2\%$, the connection probability between vertices in the second community be $6\%$, and the connection probability between vertices from two different communities be $0.005\%$. Overall, all the vertices in this graph have similar degrees and the average degree is around $36$. We thus name it the similar-degree graph. For the second graph, we set the connection probability between vertices in the same communities be $2\%$, and the connection probability between vertices from two different communities be $0.005\%$. In each community, the number of edges is approximately proportional to the number of vertices; that is, the average degrees in the first and second communities are around $36$ and $12$, respectively. We thus name it the similar-density graph. 
Given a sampling method, we select $10$ vertices out from $2,400$ vertices. We run $20$ trials and  then compute the corresponding probability that the selected vertices fall into the smaller community in each of two graphs.

We apply three sampling methods: bandlimited-space (BLS) sampling ~\cite{ChenVSK:15}, spectral-proxy (SP) sampling ~\cite{AnisCO:16} and the proposed graph neural sampling module. The selected vertices designed by bandlimited-space sampling aim to maximally preserve information in the bandlimited space, which is spanned by the first $K=10$ eigenvectors of the graph Laplacian matrix. Spectral-proxy sampling promotes the similar idea; however, instead of using the exact eigenvectors, it uses the spectral proxy to approximate the bandlimited space. Spectral-proxy sampling involves a hyperparameter, $k \geq 1$, which is the proxy order. When $k$ is larger, the spectral proxy has a better approximation to the bandlimited space. Both bandlimited-space sampling and spectral-proxy sampling are analytically designed based on explicit graph signal models. For the proposed neural sampling, we use the first $K=10$ eigenvectors of the graph Laplacian matrix as $10$ graph signals to train neural networks; in other words, we guide the neural networks to preserve information in the bandlimited space. Therefore, we expect the proposed graph neural sampling module should have similar performances with bandlimited-space sampling and spectral-proxy sampling.

\begin{table}[!t]
    \centering
    \caption{Probability to select vertices from the smaller community. The proposed graph neural sampling module has similar behaviors with the analytical sampling methods.}
    \setlength{\tabcolsep}{1.36mm}{
    \begin{tabular}{c|c|c}
         \hline 
         & Similar-degree  & Similar-density  \\
         \hline 
        Bandlimited-space sampling & $27.5\%$ & $99.0\%$ \\
        Spectral-proxy sampling, $k=1$   & $29.5\%$ & $27.5\%$  \\
        Spectral-proxy sampling, $k=3$   & $30.0\%$ & $49.0\%$  \\
        Spectral-proxy sampling, $k=5$   & $30/0\%$ & $75.5\%$ \\ 
        Neural sampling with recovery & $44.5\%$ & $70.0\%$ \\
        Neural sampling without recovery & $46.0\%$ & $71.0\%$ \\
         \hline
    \end{tabular}}
    \label{tab:sampling_two_graph_structures}
\end{table}

\begin{figure}[!t]
\centering
\includegraphics[width=0.48\textwidth]{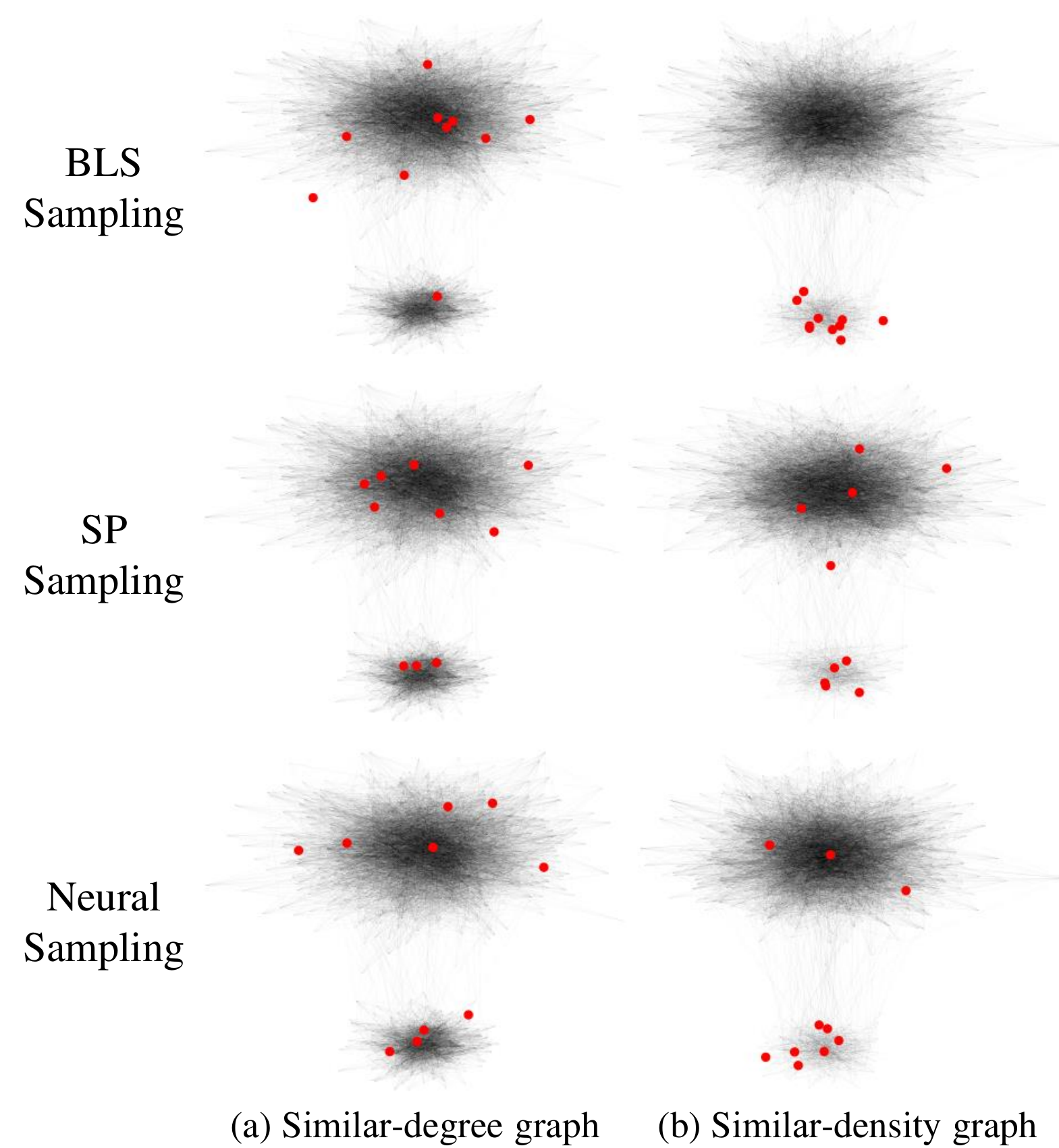}
\caption{\small Vertex selection on a similar-degree graph and a similar-density graph with various sampling methods. The sampled vertices are colored in red. The proposed graph neural sampling has similar behaviors with the analytical sampling methods on both graphs. }
\label{fig:sampling_two_graph_structures}
\end{figure}

Table~\ref{tab:sampling_two_graph_structures} shows the probability that the selected vertices fall into the smaller community in either the similar-degree graph or the similar-density graph. We see that in the similar-degree graph, bandlimited-space sampling selects $27.5\%$ vertices from the smaller community, which has $25\%$ vertices in the entire graph. The intuition is that the vertices in both communities have similar amount of information. In comparison, in the similar-density graph, bandlimited-space sampling selects $99\%$ vertices from the smaller community. The intuition is that the vertices with weaker connectivities are more informative because their information is much harder to be accessed from other vertices. Spectral-proxy sampling shows similar trends, especially when we increase the proxy order $k$. For the
graph neural sampling module, we consider both with and without using the recovery-error loss. Both cases show consistent performances and are similar to bandlimited-space sampling. This reflects that given the same graph signal model, the proposed graph neural sampling module has the similar performances with analytical sampling and it is adaptive to the underlying graph structure. Figure~\ref{fig:sampling_two_graph_structures}
illustrates the vertices selected by sampling methods in both graph structures.

\begin{figure}[!t]
  \vspace{-5pt}
  \begin{center}
    \begin{tabular}{ccc}
     \includegraphics[width=0.45\columnwidth]{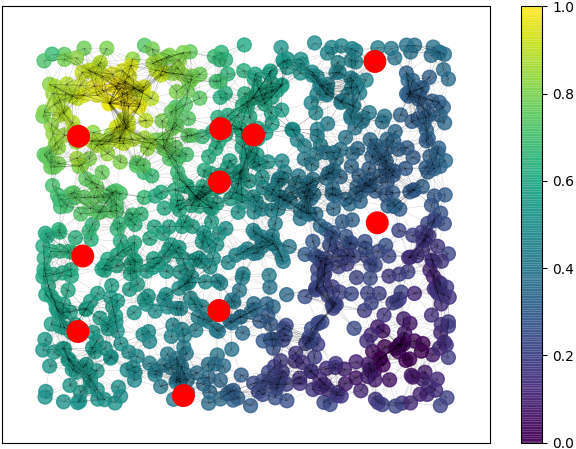} & 
     \includegraphics[width=0.45\columnwidth]{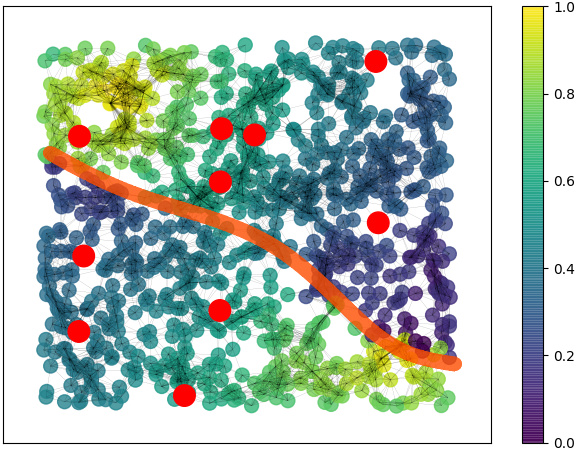} 
     \\
     \tabincell{c}{{\small (a) BLS sampling} \\ {\small on smooth signals.}} & 
     \tabincell{c}{{\small (b) BLS sampling on} \\ {\small piecewise smooth signals.}}
     \\
     \includegraphics[width=0.45\columnwidth]{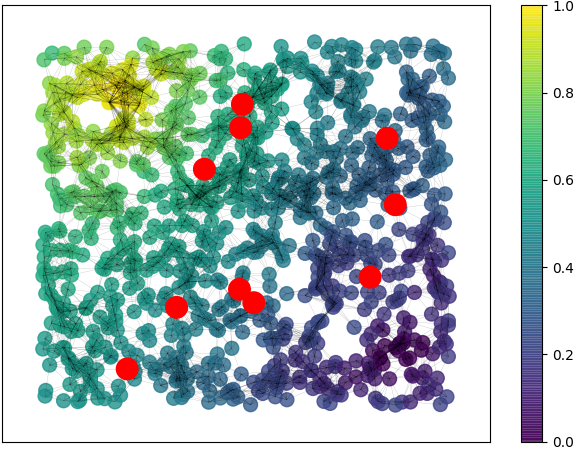} & 
     \includegraphics[width=0.45\columnwidth]{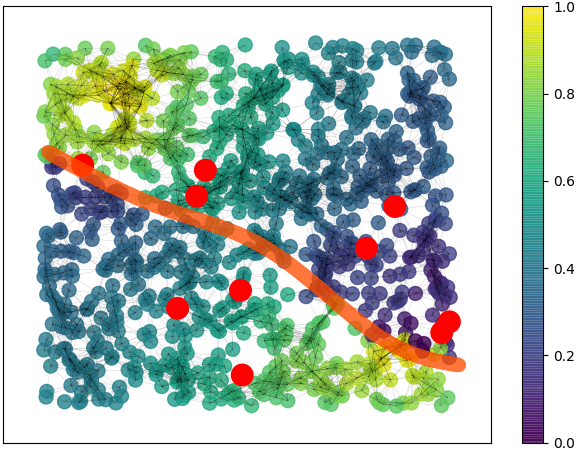}
     \\
     \tabincell{c}{{\small (c) Neural sampling} \\ {\small on smooth signals.}} & 
     \tabincell{c}{{\small (d) Neural sampling on} \\ {\small piecewise smooth signals}.}
  \end{tabular}
\end{center}
\vspace{-10pt}
\caption{\small  Vertex selection provided by bandlimited-space (BLS) sampling and graph neural sampling module. Vertices marked in red are selected. Orange curves in (b) and (d) approximately sketch the boundary between two pieces of smooth graph signals. The proposed graph neural sampling module adaptively selects more vertices around the boundary.}
\vspace{-2pt}
\label{fig:sampling_two_graph_signals}
\end{figure}

\begin{figure}[!t]
  \vspace{-5pt}
  \begin{center}
    \begin{tabular}{cc}
     \includegraphics[width=0.45\columnwidth]{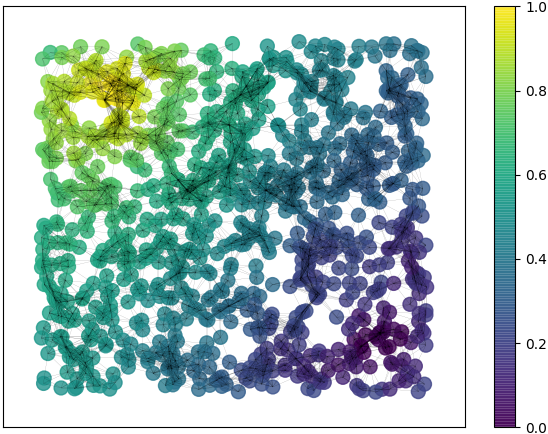} & 
     \includegraphics[width=0.45\columnwidth]{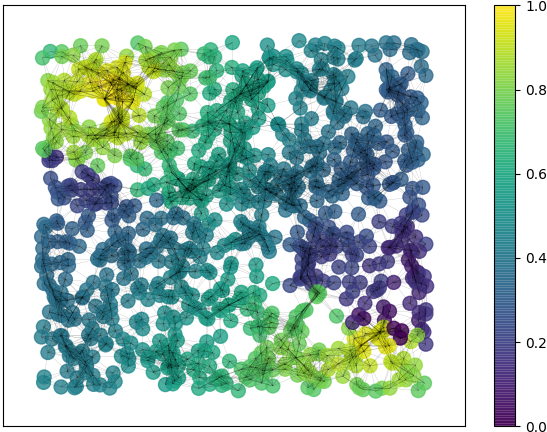}
     \\
     \tabincell{c}{\small (a) Smooth graph signal.}
     & \tabincell{c}{{\small (b) Piecewise-smooth} \\ {\small graph signal}}.
     \\
     \includegraphics[width=0.45\columnwidth]{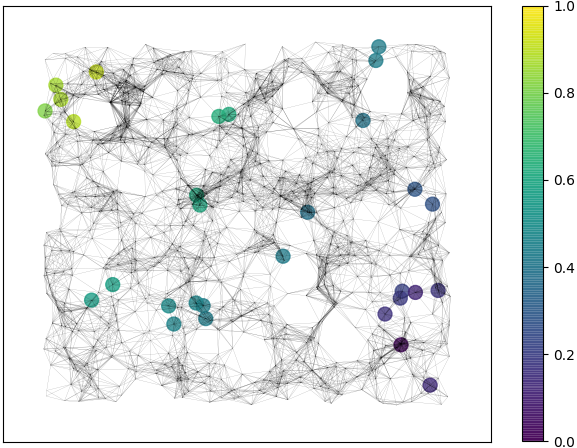}  & 
     \includegraphics[width=0.45\columnwidth]{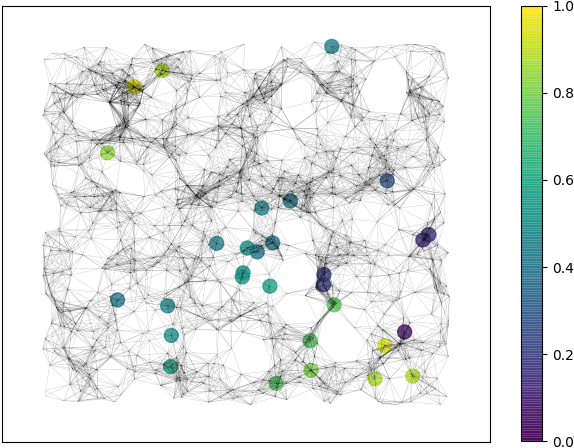}
     \\
     \tabincell{c}{{\small (c) Sampled smooth } \\ {\small graph signal}}  & 
     \tabincell{c}{{\small (d) Sampled piecewise-} \\ {\small smooth graph signal}} 
     \\
     \includegraphics[width=0.45\columnwidth]{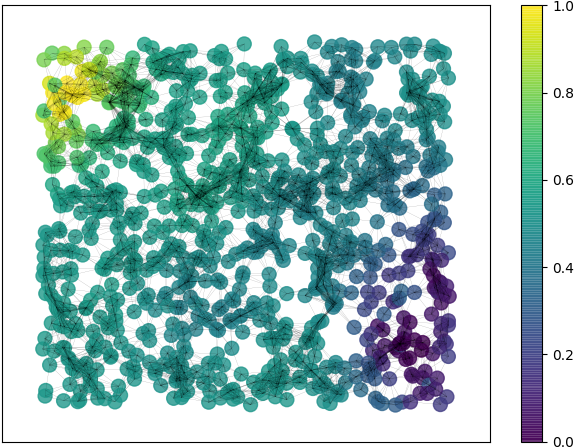}  & 
     \includegraphics[width=0.45\columnwidth]{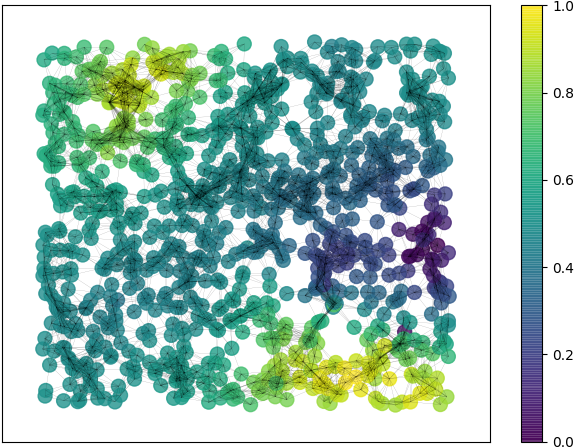}
     \\
     \tabincell{c}{{\small (e) Recovered smooth} \\ {\small graph signal.}} & 
     \tabincell{c}{{\small (f) Recovered piecewise-} \\ {\small smooth graph signal.}}
  \end{tabular}
\end{center}
\vspace{-10pt}
\caption{\small The proposed graph neural sampling and recovery modules well reconstruct both smooth and piecewise-smooth graph signals from $30$ measurements.}
\vspace{-2pt}
\label{fig:recovery_two_graph_signals}
\end{figure}

\mypar{Effect of graph signals}
Based on a geometric graph with $1,000$ vertices, we generate two types of graph signals, bandlimited graph signals, which are the first $10$ eigenvectors of the graph Laplaican matrix, and piecewise-bandlimited graph signals, where we intentionally introduce a boundary by applying a mask to the bandlimited graph signals. We use bandlimited-space sampling and the proposed graph neural sampling module to select $10$ vertices, respectively, and compare the selected vertices in Figure~\ref{fig:sampling_two_graph_signals}. Since bandlimited-space sampling is designed solely based on graph structures, the selected vertices are the same for both bandlimited and piecewise-bandlimited graph signals. On the other hand, the proposed graph neural sampling module  adapts to both graph signals and graph structures. We see that the selected vertices for piecewise-bandlimited graph signals are much closer to the boundary than those vertices for bandlimited graph signals, reflecting the vertices along the boundary are informative.

\subsection{Recovery performance}
We further validate the recovery performance of the proposed graph neural recovery module.  Based on a geometric graph with $1,000$ vertices, we generate two types of graph signals, $10$ bandlimited graph signals and $10$ piecewise-bandlimited graph to train two pairs of the proposed graph neural sampling and recovery modules, respectively. For a new bandlimited or piecewise-bandlimited graph signal, we use the proposed graph neural sampling module to select $30$ vertices and take the corresponding measurements; see Figure~\ref{fig:recovery_two_graph_signals} (c) and (d), respectively. We then apply the proposed graph recovery sampling module to reconstruct the original graph signals. Figure~\ref{fig:recovery_two_graph_signals} (e) and (f) illustrates that the reconstructions well approximates the original graph signals. This validates that i) the proposed graph neural sampling and recovery modules can adapt to various types of graph signals; and ii) the proposed graph neural sampling and recovery modules can be well generalized to new data with small amount of training data.

\begin{figure*}[!t]
  \vspace{-5pt}
  \begin{center}
    \begin{tabular}{ccc}
     \includegraphics[width=0.6\columnwidth]{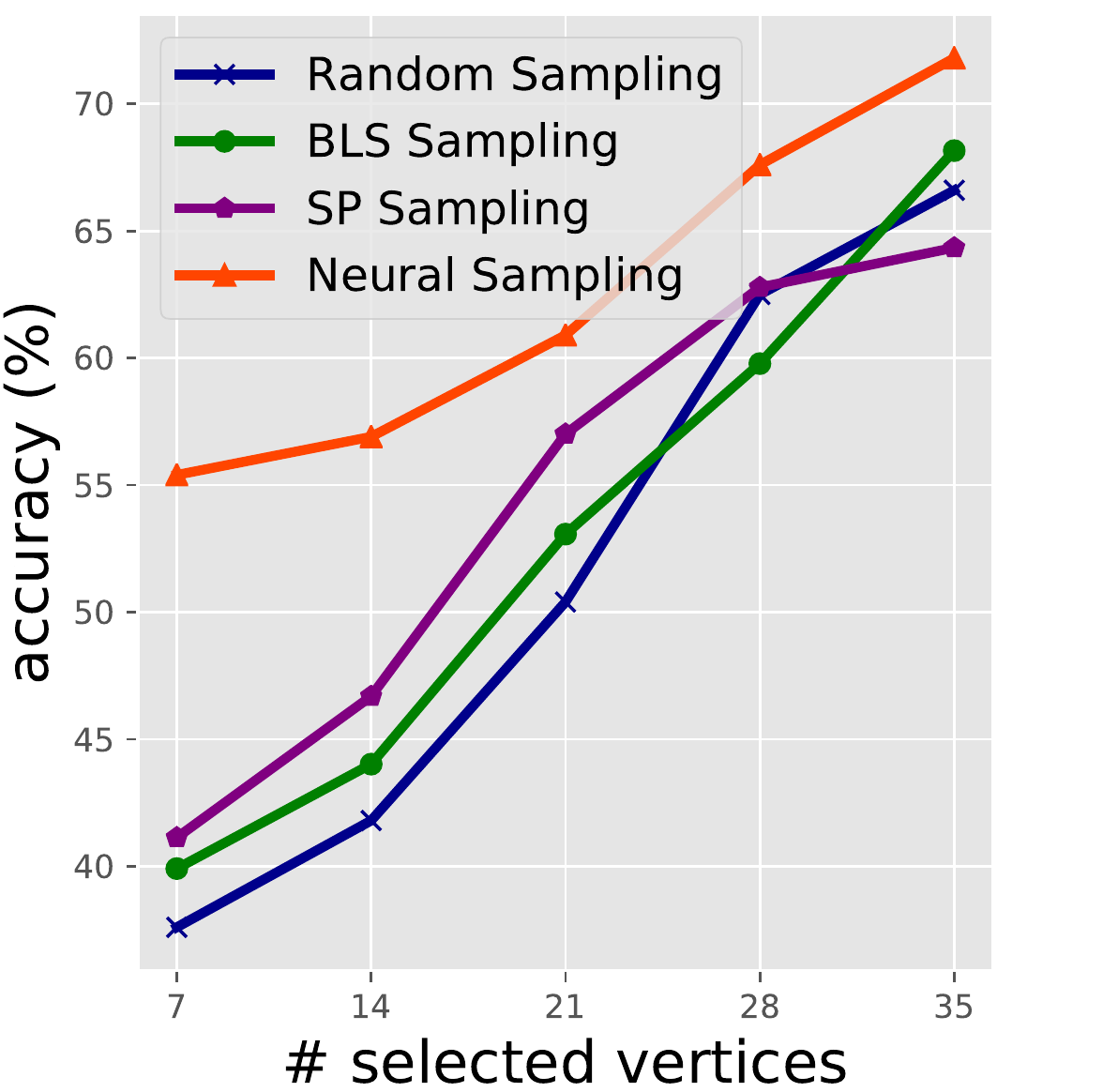} & 
     \includegraphics[width=0.6\columnwidth]{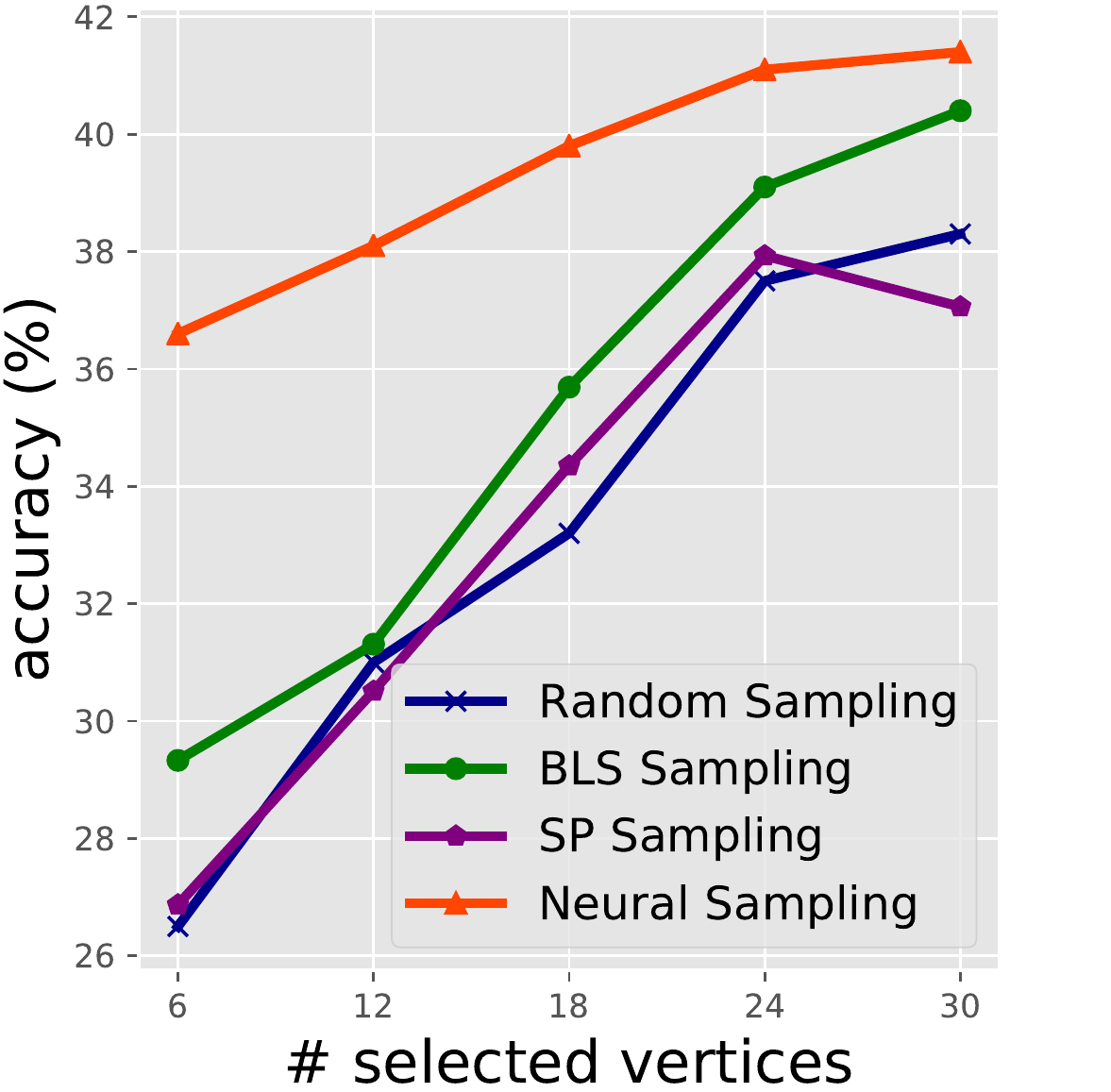} & 
     \includegraphics[width=0.6\columnwidth]{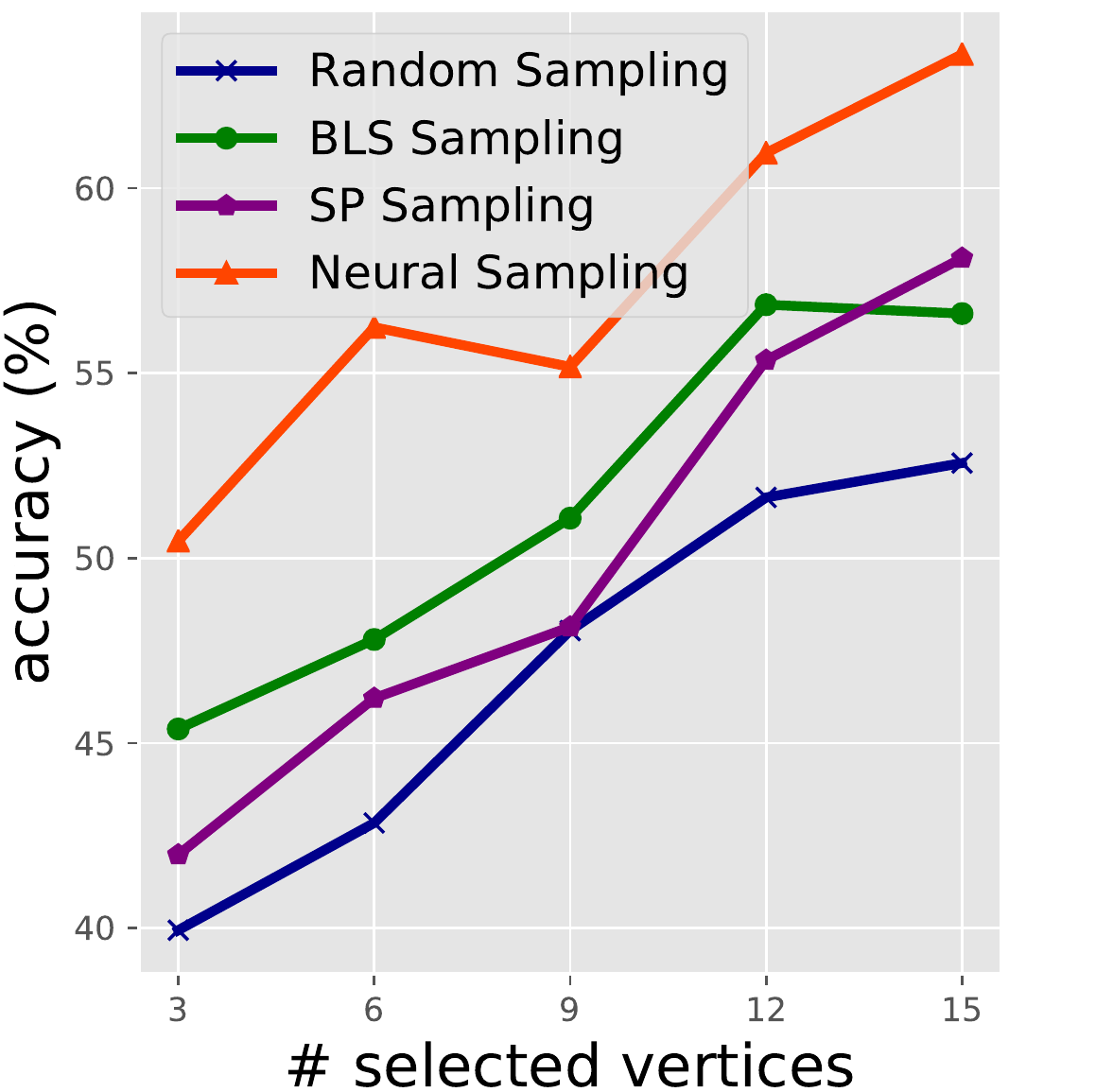}
    \\
    {\small (a) Cora.} &  
    {\small (b) Citeseer.} & 
    {\small (c) Pubmed.}
  \end{tabular}
\end{center}
\vspace{-10pt}
\caption{\small Semi-supervised vertex classification accuracy as a function of the number of labeled vertices. The labeled vertices are selected by using various sampling methods. The proposed graph neural sampling module (red) significantly outperforms random sampling (blue) and two analytical sampling methods: bandlimited-space (BLS) sampling (green) and spectral-proxy (SP) sampling (purple).}
\vspace{-2pt}
\label{fig:active_sampling}
\end{figure*}

\begin{table*}[t]
  \centering
  \small
  \caption{Vertex classification accuracies (\%) of various methods, where `full-sup.' and `semi-sup.' denote the full-supervised and semi-supervised settings, respectively. The proposed GXN improves the classification accuracy by $1.15\%$ in average.}
  \setlength{\tabcolsep}{3.60mm}{
  \begin{tabular}{c|cc|cc|cc}
  \specialrule{0.08em}{0pt}{1pt}
  Dataset & \multicolumn{2}{c|}{Cora} & \multicolumn{2}{c|}{Citeseer} & \multicolumn{2}{c}{Pubmed} \\
  % \specialrule{0.05em}{1pt}{1pt}
  \# Vertices (Classes) & \multicolumn{2}{c|}{2708 (7)} & \multicolumn{2}{c|}{3327 (6)} & \multicolumn{2}{c}{19717 (3)} \\
  %\specialrule{0.05em}{1pt}{1pt}
  Supervision & full-sup. & semi-sup. & full-sup. & semi-sup. & full-sup. & semi-sup.\\

  \specialrule{0.05em}{1pt}{1pt}
  DeepWalk~\cite{PerozziAS:14} & 78.4 $\pm$ 1.7 & 67.2 $\pm$ 2.0 & 68.5 $\pm$ 1.8 & 43.2 $\pm$ 1.6 & 79.8 $\pm$ 1.1 & 65.3 $\pm$ 1.1 \\
  ChebNet~\cite{DefferrardBV:16} & 86.4 $\pm$ 0.5 & 81.2 $\pm$ 0.5 & 78.9 $\pm$ 0.4 & 69.8 $\pm$ 0.5 & 88.7 $\pm$ 0.3 & 74.4 $\pm$ 0.4 \\
  GCN~\cite{KipfW:17} & 86.6 $\pm$ 0.4 & 81.5 $\pm$ 0.5 & 79.3 $\pm$ 0.5 & 70.3 $\pm$ 0.5 & 90.2 $\pm$ 0.3 & 79.0 $\pm$ 0.3\\
  GAT~\cite{VelivckovicCCRLB:18} & 87.8 $\pm$ 0.7 & 83.0 $\pm$ 0.7 & 80.2 $\pm$ 0.6 & 73.5 $\pm$ 0.7 & 90.6 $\pm$ 0.4 & 79.0 $\pm$ 0.3 \\
  FastGCN~\cite{ChenMX:18} & 85.0 $\pm$ 0.8 & 80.8 $\pm$ 1.0 & 77.6 $\pm$ 0.8 & 69.4 $\pm$ 0.8 & 88.0 $\pm$ 0.6 & 78.5 $\pm$ 0.7 \\
  ASGCN~\cite{HuangZRH:18} & 87.4 $\pm$ 0.3 & - & 79.6 $\pm$ 0.2 & - & 90.6 $\pm$ 0.3 & -\\
  Graph U-Net~\cite{GaoJ:19} & - & 84.4 & - & 73.2 & - & 79.6 \\
  \specialrule{0.05em}{0pt}{1pt}
  GXN & {\bf 88.9 $\pm$ 0.4} & {\bf 85.1 $\pm$ 0.6} & {\bf 80.9 $\pm$ 0.4} & {\bf 74.8 $\pm$ 0.4} & {\bf 91.8 $\pm$ 0.3} & {\bf 80.2 $\pm$ 0.3} \\
  GXN (noCross) & 87.3 $\pm$ 0.4 & 83.2 $\pm$ 0.5 & 79.5 $\pm$ 0.4 & 73.7 $\pm$ 0.3 & 91.1 $\pm$ 0.2 & 79.6 $\pm$ 0.3 \\
  \specialrule{0.08em}{1pt}{1pt}
  \end{tabular}}
  \label{tab:node_classification}
\end{table*}

\begin{table*}[t]
  \centering
  \small
  \caption{Graph classification accuracies (\%) of various methods on multiple datasets. 
  Under our GXN framework, we compare various neural-network-based graph pooling methods.  GXN (gPool), GXN (SAGPool) and GXN (AttPool) denote GXN with gPool~\cite{GaoJ:19}, SAGPool~\cite{LeeLK:19} and AttPool~\cite{HuangLLLL:19}. Under the GXN framework, we further compare various feature-crossing methods, including fusion of coarse-to-fine ($\uparrow$), fine-to-coarse ($\downarrow$), no feature-crossing (noCross), and feature-crossing at early, late and all layers of networks. The proposed GXN improves the classification accuracy by $1.30\%$ in average.}
  \setlength{\tabcolsep}{2.92mm}{
  \begin{tabular}{ccccccc}
  \specialrule{0.08em}{0pt}{1pt}
  Dataset & IMDB-B & IMDB-M & COLLAB & D\&D & PROTEINS & ENZYMES \\
  % \specialrule{0.05em}{1pt}{1pt}
  \# Graphs (Classes) & 1000 (2) & 1500 (3) & 5000 (3) & 1178 (2) & 1113 (2) & 600 (6) \\
%   \# Classes & 2 & 2 & 3 & 3 & 2 & 2 & 6 \\
  Avg. \# Vertices & 19.77 & 13.00 & 74.49 & 284.32 & 39.06 & 32.63 \\
  \specialrule{0.05em}{1pt}{1pt}
  PatchySAN~\cite{NiepertAK:16} & 76.27 $\pm$ 2.6 & 69.70 $\pm$ 2.2 & 43.33 $\pm$ 2.8 & 72.60 $\pm$ 2.2 & 75.00 $\pm$ 2.8 & - \\
  ECC~\cite{SimonovskyK:17} & - & - & 67.79 & 72.54 & 72.65 & 53.50 \\
  Set2Set~\cite{GilmerSRVD:17} & - & - & 71.75 & 78.12 & 74.29 & 60.15 \\
  DGCNN~\cite{ZhangCNC:18} & 70.00 $\pm$ 0.9 & 47.83 $\pm$ 0.9 & 73.76 $\pm$ 0.5 & 79.37 $\pm$ 0.9 & 73.68 $\pm$ 0.9 & - \\
  DiffPool~\cite{YingYMRHL:18} & 70.40 & 47.83 & 75.84 & 80.64 & 76.25 &  {62.53} \\
  Graph U-Net~\cite{GaoJ:19} & 72.10 & 48.33 & 77.56 & 82.43 & 77.68 & 58.57 \\
  SAGPool~\cite{LeeLK:19} & 72.80 & 49.43 & 78.52 & 82.84 & 78.28 & 60.23 \\
  AttPool~\cite{HuangLLLL:19} & 73.60 & 50.67 & 77.04 & 79.20 & 76.50 & 59.76 \\
  StructPool~\cite{YuanJ:20} & {74.70} & {52.47} & 74.22 & 84.19 & {80.36} & {\bf 63.83} \\
  \specialrule{0.05em}{0pt}{1pt}
  GXN & {\bf 77.30 $\pm$ 0.8} & {\bf 54.57 $\pm$ 0.9} & {\bf 80.62 $\pm$ 0.8} &  {\bf 84.26 $\pm$ 1.3} & {\bf 80.38 $\pm$ 1.2} & 60.43 $\pm$ 1.0 \\
  GXN (gPool) & 76.40 $\pm$ 1.0 & 53.16 $\pm$ 0.6 & 79.85 $\pm$ 1.1 & 83.44 $\pm$ 1.4 & 78.74 $\pm$ 0.8 & 59.74 $\pm$ 1.3 \\
  GXN (SAGPool) & 76.90 $\pm$ 0.7 & 52.74 $\pm$ 0.8 & 80.28 $\pm$ 0.8 & 84.14 $\pm$ 1.3 & 79.58 $\pm$ 1.1 & 59.87 $\pm$ 1.1 \\
  GXN (AttPool) & 76.85 $\pm$ 0.9 & 53.62 $\pm$ 0.9 & 80.37 $\pm$ 0.9 & 84.07 $\pm$ 1.0 & 79.09 $\pm$ 1.3 & 59.45 $\pm$ 1.0 \\
  \specialrule{0.05em}{0pt}{1pt}
  GXN ($\uparrow$) & 77.10 $\pm$ 0.6 & 54.22 $\pm$ 1.0 & 80.11 $\pm$ 0.8 & 84.13 $\pm$ 1.0 & 79.87 $\pm$ 0.8 & 59.48 $\pm$ 0.8 \\
  GXN ($\downarrow$) & 76.80 $\pm$ 1.1 & 54.08 $\pm$ 0.7 & 80.28 $\pm$ 1.0 & 83.61 $\pm$ 1.2 & 79.64 $\pm$ 1.2 & 58.95 $\pm$ 1.3 \\
%   \specialrule{0.05em}{0pt}{1pt}
  GXN (noCross) & 74.80 $\pm$ 1.1 &  52.68 $\pm$ 0.9 & 79.94 $\pm$ 0.7 &  83.64 $\pm$ 0.9 &  79.26 $\pm$ 0.9 & 59.37 $\pm$ 1.2 \\
  GXN (early) & {77.10} $\pm$ 0.6 & 53.83 $\pm$ 0.6 & 80.18 $\pm$ 0.8 &  84.24 $\pm$ 1.0 &  80.30 $\pm$ 1.0 & 60.43 $\pm$ 1.0 \\
  GXN (late) & 76.30 $\pm$ 0.9 & 54.12 $\pm$ 1.0 & 79.88 $\pm$ 1.1 & 83.85 $\pm$ 1.5 & 80.03 $\pm$ 1.2 & 59.84 $\pm$ 0.9 \\
  \specialrule{0.08em}{1pt}{1pt}
  \end{tabular}}
  \label{tab:graph_classification}
\end{table*}

\section{Applications} 
\label{sec:application}
In this section, we present three applications: active-sampling-based semi-supervised classification, vertex classification and graph classification. The first application validates the quality of the selected vertices obtained by the graph neural sampling module. The second and third applications show the superiority of the proposed graph cross network.

\subsection{Active-sampling-based semi-supervised classification}
The task is to classify each vertex to a predefined category. Here we are allowed to actively query the category labels associated with a few selected vertices as our training data and then classify all the rest vertices in a semi-supervised paradigm. We compare various sampling methods followed by the same classifier. The classification accuracy thus reflects the amount of information carried by the selected vertices. The goal of this experiment is to validate the effectiveness of the proposed graph neural sampling module.

\mypar{Datasets}
We use three classical citation networks: Cora, Citeseer and Pubmed~\cite{KipfW:17}, whose vertices are articles and edges are references. Cora has $2,708$ vertices with $7$ predefined vertex categories, Citeseer has $3,327$ vertices with $6$ predefined vertex categories and Pubmed has $19,717$ vertices with $3$ predefined vertex categories. Each dataset has multiple binary vertex features. 

\mypar{Experimental setup}
We consider four sampling methods to actively select a few vertices: random sampling, which selects each node randomly and uniformly, bandlimited-space (BLS) sampling~\cite{ChenVSK:15}, spectral-proxy (SP) sampling ~\cite{AnisCO:16} and the proposed graph neural sampling module, which uses both vertex features and graph structures. Once the training samples are selected, we then use the same standard graph convolutional network~\cite{KipfW:17} as the semi-supervised classifier to classify the rest vertices. 

\mypar{Results}
Figure~\ref{fig:active_sampling} shows the classification accuracy as a function of the number of selected vertices in three datasets. The $x$-axis is the number of selected vertices and the $y$-axis is the classification accuracy. We expect with more selected vertices, the classification accuracy is increasing. We see that across three different datasets, The proposed graph neural sampling module (in red) significantly outperforms the other methods. For example, in Cora, given $7$ selected vertices, the gap between the proposed graph neural sampling module and bandlimited-space sampling is more than $10\%$. At the same time, two analytical sampling methods, bandlimited-space sampling (in green) and spectral-proxy sampling (in purple), consistently outperform random sampling (in blue). The intuition is that both analytical sampling methods assume smooth graph signal models, which are beneficial, but cannot perfectly fit arbitrary datasets; while the proposed graph neural sampling module combines information from both vertex features and graph structures to capture the underlying implicit graph signal model, and is able to adaptively select informative vertices in each dataset.

\subsection{Vertex classification}
The task is to classify each vertex to a predefined category under both full-supervised and semi-supervised settings. The goal of this experiment is to validate the effectiveness of the proposed multiscale graph neural network, GXN.

\mypar{Datasets}
We use three standard citation networks: Cora, Citeseer and Pubmed~\cite{KipfW:17}. We perform both full-supervised and semi-supervised vertex classification. For full-supervised classification, we label all the vertices in training sets for model training; for semi-supervised, we only label a few vertices (around 7\% on average) in training sets. We use the default separations of training/validation/test subsets~\cite{KipfW:17}.

\mypar{Experimental setup}
We consider three scales in the proposed GXN, which preserve $100\%$, $90\%$ and $70\%$ vertices from the original scales, respectively. For both input and readout layers, we use one graph convolution layer~\cite{KipfW:17}; for multiscale feature extraction, we use two graph convolution layers followed by ReLUs at each scale and feature-crossing layers between any two consecutive scales at any layers. For the involved graph neural sampling module in GXN, we modify $C(\Omega)$ by preserving only the first term to improve the efficiency of solving problem~\eqref{eq:problem}. In this way, each vertex contributes the vertex set independently. The optimal solution is to select top-$K$ vertices. The hidden feature is 128-dimension across the network. In the loss function~\eqref{eq:vertex_classification_loss}, the hyperparameter $\alpha$ decays from $2$ to $0$ during training, where the graph neural sampling module needs fast convergence for vertex selection; and the model gradually focuses more on tasks based on the effective sampling. We use Adam optimizer~\cite{Goodfellow:2016} and the learining rates range from $0.0001$ to $0.001$ for different datasets.

\mypar{Results}
We compare the proposed GXN to state-of-the-art methods: DeepWalk~\cite{PerozziAS:14}, GCN~\cite{KipfW:17}, GraphSAGE~\cite{HamiltonYL:17}, FastGCN~\cite{ChenMX:18}, ASGCN~\cite{HuangZRH:18}, and Graph U-Net~\cite{GaoJ:19} for vertex classification. We reproduce these methods for both full-supervised and semi-supervised learning based on their official codes. 
Table~\ref{tab:node_classification} compares the vertex classification accuracies of various methods. Under both full-supervised and semi-supervised settings, the proposed GXN achieves higher average accuracy by $1.15\%$.  To be specific,  GXN(noCross) means a degraded GXN without any feature-crossing layer. Overall, we see that the feature-crossing layers improves the accuracies by $1.10\%$ on average. Introducing more connections across graph scales improves the classification performances.

\subsection{Graph classification}
The task is to classify an entire graph to a predefined category. The goal of this experiment is again, to validate the effectiveness of the proposed GXN.

\textbf{Datasets.} We use social network datasets: IMDB-B, IMDB-M and COLLAB~\cite{YanardagV:15}, and bioinformatic datasets: D\&D~\cite{DobsonD:03}, PROTEINS~\cite{FeragenKPBB:13}, and ENZYMES~\cite{BorgwardtSSVSK:05}. Table~\ref{tab:graph_classification} shows the dataset information. Note that no vertex feature is provided in three social network datasets, and we use one-hot vectors to encode the vertex degrees as vertex features, explicitly utilizing some structural information. We use the same dataset separation as in~\cite{GaoJ:19}, perform 10-fold cross-validation, and show the average accuracy for evaluation.

\mypar{Experimental setup}
We consider the same setting used in the task of the vertex classification for the proposed GXN.  The only difference is that after the readout layers, we unify various graph embeddings to the same dimension by using the same aggregation method adopted in DGCNN~\cite{ZhangCNC:18}, AttPool~\cite{HuangLLLL:19} and Graph U-Net~\cite{GaoJ:19}.

\mypar{Results} We compare the proposed GXN to other GNN-based methods, including 
PatchySAN~\cite{NiepertAK:16}, ECC~\cite{SimonovskyK:17}, 
Set2Set~\cite{GilmerSRVD:17}, 
DGCNN~\cite{ZhangCNC:18}, 
DiffPool~\cite{YingYMRHL:18}, 
Graph U-Net~\cite{GaoJ:19}, 
SAGPool~\cite{LeeLK:19}, 
AttPool~\cite{HuangLLLL:19}, 
and StructPool~\cite{YuanJ:20},
where most of them performed multiscale graph feature learning. 
Additionally, we design several variants of GXN: 1) to test the superiority of the proposed graph neural sampling module, we apply gPool~\cite{GaoJ:19}, SAGPool~\cite{LeeLK:19} and AttPool~\cite{HuangLLLL:19} in the same architecture of GXN, denoted as GXN (gPool), GXN (SAGPool) and GXN (AttPool), respectively; 2) we investigate different feature-crossing mechanism, including various crossing directions and crossing positions. 

Table~\ref{tab:graph_classification} compares the accuracies of various methods for graph classification. We see that our model outperforms the state-of-the-art methods on $5$ out of $6$ datasets, achieving an improvement by $1.30\%$ on average accuracies. Besides, GXN with the proposed graph neural sampling module lead to better classification performance than GXNs with the other neural-network-based graph pooling methods, such as gPool, SAGPool and AttPool. Further, we compare GXN with various feature-crossing methods, including fusion of coarse-to-fine ($\uparrow$), fine-to-coarse ($\downarrow$), no feature-crossing (noCross), and feature-crossing at the first layer (early) and at the late layer (late). We see that GXN with more crossing connections performs consistently better, which is consistent with what we see in the task of vertex classification.

\section{Conclusions}
\label{sec:conclusions}
We propose interpretable graph neural networks for sampling and recovery of graph signals. Through neural estimation of mutual information between vertex and neighborhood features, the proposed neural sampling module is optimized to select those vertices that maximally reflect their corresponding neighborhoods. The proposed neural recovery module is based on algorithm  unrolling, which transforms each iteration of an analytical recovery algorithm to a network layer. Compared to previous analytical sampling and recovery, the proposed methods are able to learn a variety of graph signal models from data by leveraging the learning ability of neural networks. Meanwhile, the proposed methods exploit the graph-related properties and provide interpretability. Based  on  the  proposed graph neural  sampling  and  recovery  modules,  we  propose  a  new  multiscale  graph  neural  network, which  is  a trainable counterpart  of  a  multiscale  graph  filter bank. It introduce a new feature-crossing layer,  allowing intermediate features from multiple graph scales to communicate and merge.  

In the experiments, we illustrate the effects of the proposed neural sampling and recovery modules and find that the proposed modules can flexibly adapt to various graph structures and graph signals. In the task of active-sampling-based semi-supervised learning, the proposed neural sampling improves the classification accuracy over $10\%$ in Cora dataset. We further validate the proposed multiscale graph neural network on several standard datasets for both vertex classification and graph classification. The result shows that the  proposed  method  consistently improves the classification accuracy.

\bibliographystyle{IEEEbib}
\bibliography{refs}
\end{document}